\newtheorem{theorem}{Theorem}[section]
\newtheorem{remark}[theorem]{Remark}
\newtheorem{example}[theorem]{Example}
\newtheorem{definition}[theorem]{Definition}
\newtheorem{corollary}{Corollary}[theorem]
\newcommand\hull{$\mathcal{H}^{tr}$\xspace}
\definecolor{purple}{cmyk}{0.65, .8, 0, 0}
\definecolor{cyan}{cmyk}{.9, 0.00, 0.0, 0.1}
\title{An Ambiguity Measure for Recognizing the Unknowns in Deep Learning}
\author{%
  Roozbeh Yousefzadeh\\
  Huawei Hong Kong Research Center\\
  roozbeh.yz@gmail.com
}
\begin{document}

\maketitle

\begin{abstract}
We study the understanding of deep neural networks from the scope in which they are trained on. While the accuracy of these models is usually impressive on the aggregate level, they still make mistakes, sometimes on cases that appear to be trivial. Moreover, these models are not reliable in realizing what they do not know leading to failures such as adversarial vulnerability and out-of-distribution failures. Here, we propose a measure for quantifying the ambiguity of inputs for any given model with regard to the scope of its training. We define the ambiguity based on the geometric arrangements of the decision boundaries and the convex hull of training set in the feature space learned by the trained model, and demonstrate that a single ambiguity measure may detect a considerable portion of mistakes of a model on in-distribution samples, adversarial inputs, as well as out-of-distribution inputs. Using our ambiguity measure, a model may abstain from classification when it encounters ambiguous inputs leading to a better model accuracy not just on a given testing set, but on the inputs it may encounter at the world at large. In pursuit of this measure, we develop a theoretical framework that can identify the unknowns of the model in relation to its scope. We put this in perspective with the confidence of the model and develop formulations to identify the regions of the domain which are unknown to the model, yet the model is guaranteed to have high confidence.
\end{abstract}


\section{Introduction}

How can a model know what it does not know? How can we develop models that can reliably relate inputs to their knowledge and abstain from processing inputs that are ambiguous to them? In many applications of deep learning, e.g., in object detection, we encounter inputs that are hard to process and evaluate, i.e., corner cases. Consider the case of autonomous driving where a model may encounter ambiguous images, e.g., a pedestrian with an unusual clothing, etc. These are potentially points that may lead to accidents and it would be useful to automatically quantify whether a model is uncertain about its decisions. Moreover, a model may encounter inputs that are adversarially designed to mislead it \cite{szegedy2014intriguing}. We also have cases of out-of-distribution inputs \cite{hendrycks2017baseline}. All these cases can be considered various types of input ambiguity for a given model, i.e., the unknowns of the model, potentially leading to its failure. 
If we have tools to detect such inputs when we encounter them, we can have the model make more conservative decisions or abstain all together. For example, an autonomous car could decide to slow down or stop by the side of the road, if it is unsure about what it sees in the road ahead.

\begin{figure}[h]
  \centering
   \includegraphics[width=.9\linewidth]{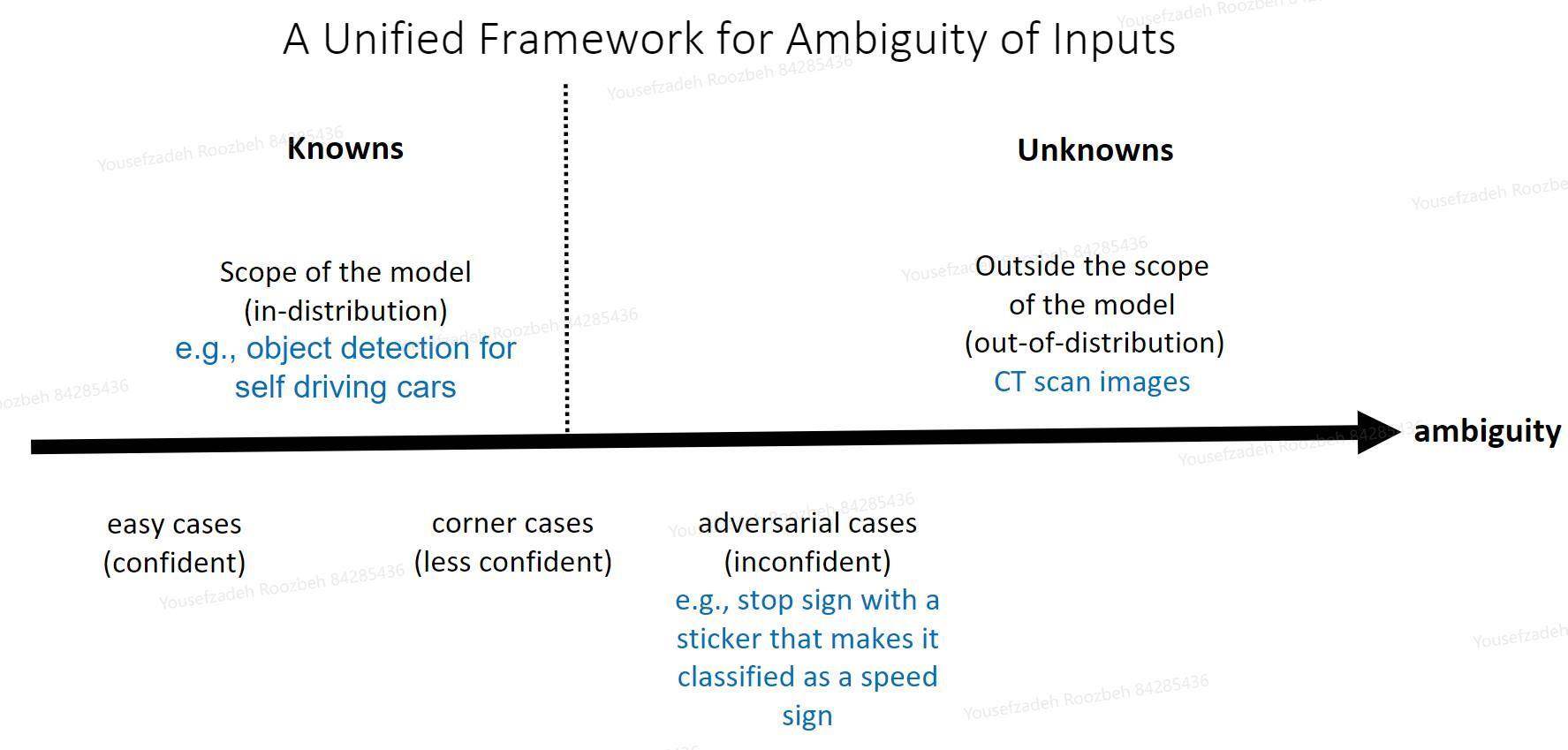}
   \caption{We propose a unified ambiguity framework to identify the knowns and unknowns of deep learning models.}
  \label{fig:amb_spectrum}
\end{figure}






The problem is that usually unknowns of a model is unknown to itself. In other words, AI models usually do not know what they do not know \cite{nalisnick2018do}. 
Indeed, the question of recognizing the unknowns of a model is central in many applications of AI models. In the literature, we have separate fields that study adversarial vulnerabilities \cite{goldblum2023dataset} and out-of-distribution generalization \cite{ye2022ood}. We also have studies that aim to improve the accuracy of models by reducing their mistakes on in-distribution testing samples, e.g., by model calibration \cite{Nixon_2019_CVPR_Workshops}, uncertainty quantification \cite{ovadia2019trust}, misclassification detection \cite{hendrycks2016baseline}. However, we do not have a unified framework to connect all of these failure modes. Imagine we want to deploy a trained model in the real world while concerned about all the above sorts of failures. It seems very expensive and impractical to deploy a model while having a pipeline of supplementary models for weeding out adversarial and out-of-distribution inputs along with corner cases. Often, models offered for adversarial or out-of-distribution detection are more expensive than the main model at hand \cite{ren2019likelihood}, e.g., the recent OOD detection methods require the images to be processed with a diffusion model, or many of the other OOD detection methods rely on ensembles of models as large as the original model. Hence, the pipeline of supplementary models is extremely expensive compared to the model itself. If we set aside the question of computational feasibility, another interesting question is whether these failure detection methods can work in harmony with each other to weed out the failure modes of a specific model at hand? The answer seems to be currently no, as \cite{jaeger2023call} demonstrated that most of these failure detection methods do not generalize well, and moreover, for some of these failure detection methods, even the evaluation metrics used in the literature are not appropriate.



To address these issues, we propose an ambiguity measure that can relate these seemingly separate fields of study into a unified framework by leveraging the geometric information available in the feature space that models learn from their training set, specifically, in relation to the decision boundaries and convex hulls, the information that is freely available to any model designer, but it is often neglected. Our contributions can be summarized as the following:

\begin{enumerate}
    \item We first develop a theoretical framework to evaluate the unknowns of the trained models in relation to their confidence. With certain guarantees we quantify the portions of the domain which are unknown to the trained models, yet the models are guaranteed to process them with high confidence.
    \item We propose an ambiguity measure that is based on the geometric arrangements in the feature space of a trained model and the common failure modes of the models. We use image classification models as our case study and provide the necessary algorithms to compute the ambiguity measure fast, in a fraction of a second for models as large as Swin Transformer trained on Imagenet \cite{deng2009imagenet}. Through experiments, we also show that our proposed measure is able to detect a considerable portion of the mistakes made by the models, along with adversarial inputs, and out-of-distribution images.
    \item Our ambiguity measure provides an automatically generated text describing why an image is ambiguous, relating the ambiguity to samples in the training set and to the decision boundaries of the model.
    \item We show that there is valuable geometric information in the feature space of deep networks which has been often neglected, especially with regard to the convex hull of training sets.
\end{enumerate}

\section{Geometric information available in the last hidden layer of trained models}

Let us consider a classification model $\mathcal{M}$ trained on a training set $\mathcal{D}^{tr}$. This model may have $d$ dimensions for its input space and $n$ dimensions for its output corresponding to $n$ distinct classes $$z = \mathcal{M}(x),$$ where $z$ is the output of the model for input $x$, and the element of $z$ with largest value will indicate the classification for $x$
$$\mathcal{C}(x) = \{i: z_i=\max_{k} z_k \}.$$

We use $\Phi$ to denote the latent space in the last hidden layer of a given model. This feature space has $f$ dimensions. For example, a typical Swin Transformer trained on Imagenet has $f=1,536$ and $n=1,000$ in its last hidden layer. When $\mathcal{M}$ processes an input $x$, it maps the $x$ to its latent space
\begin{equation} \label{eq:map_phi}
    x_\phi = \mathcal{M}_\phi(x),
\end{equation}
and then $x_\phi$ will be processed via
\begin{equation} \label{eq:output_phi}
   z = W_\phi x_\phi + b_\phi,
\end{equation}
where $W_\phi$ and $b_\phi$ are the weight matrix and bias vector of the last hidden layer of the model.

\subsection{Why study the learned feature space?}

Our main motivation in this approach is to understand and verify the learning of a given trained model, i.e., what the model has learned from its training data and how it uses its learning to process the new inputs.

The success of deep networks is largely attributed to the features they extract from their inputs. For images, contents of interest usually cannot be explained in terms of individual pixels. Rather, for both humans and the models, the spatial relationship between the groups of pixels is the key to identifying what is depicted in an image. Similarly, for natural language processing, the sequence of words matter in the meaning created from individual words. Another way to describe this phenomenon is that inputs, such as image and text (and many other types of data), have a lower dimensional structure, and in order to learn from such data, one has to learn that underlying structure. This has been a subject of study from various perspectives in machine learning often under the term representation learning \cite{bengio2013representation,cohen2020separability}. Studying the underlying structure of the data also has a rich literature in various fields of mathematics \cite{adler1967modern,golub2012matrix}.

In pursuit of understanding the underlying structure, it is important to note that deep networks often make mistakes, and their learning from the data sometimes has certain flaws, i.e., they may fail to learn certain structures in the data. Moreover, training sets may have certain biases and flaws of their own which may affect the learning of the models trained on them. For example, in a facial recognition dataset, certain groups of people may not be well represented and that could affect the feature space learned by the models \cite{jain2023distilling}.

Here, we study the learned feature spaces not as flawless manifolds that represent the ideal way of learning. Rather, we study a given feature space to view what a model has learned from its data, and to verify the knowns and unknowns of the given model. We provide systematic methods that can interpret the learning of a trained model, and explain how it views the testing samples in relation to what it has learned.





\subsection{Feature space: further compression via SVD}

The feature space that we consider is based on the Singular Value Decomposition (SVD) of $\Phi$. We use SVD to project the points in the latent space of the last hidden layer to a lower dimensional space denoted by $\Psi$. SVD decomposes the matrix $W_\phi$ into three matrices: $$W_\phi = U \Sigma V^T,$$ where $U$ is a unitary $n \times n$ matrix, $\Sigma$ is a diagonal $n \times f$ matrix, and V is a unitary $f \times f$ matrix. We use $V^T$ to project the points in the latent space: 
\begin{equation} \label{eq:map_psi}
    x_\psi = V^T x_\phi.
\end{equation}
Given a point $x_\psi$ in feature space $\Psi$, one can obtain the output of the model via
\begin{equation} \label{eq:psi_output}
   z = U \Sigma x_\psi + b_\phi.
\end{equation}

{\bf Benefits of decomposing the last hidden layer:} The results we present in this paper are all in the $\Psi$ space. We note that analyzing the original space of $\Phi$ yields to results that are slightly less accurate compared to the results obtained from $\Phi$. The better accuracy is explainable via the rotation and scaling operations of SVD \cite{golub2012matrix} which may pose the geometric arrangement of data in a way that is more suitable for our classification task. This is evident in Figure~\ref{fig:svd_disthull} where for the orange histogram ($\Psi$), the separation between the closest class and the other classes is more pronounced. Indeed, if we were to use the distance to the convex hull of closest class as our classification criteria, the accuracy of our model would be 94.40\% for $\Psi$, 94.35\% for $\Phi$, and 94.33\% for $\Phi @ U$ while the accuracy obtained from the softmax score is 94.37\%. This indicates that for accurate classification, one can rely merely on the convex hull of training set in the feature space without considering the decision boundaries of this model.


\begin{figure}[h]
  \centering
   \includegraphics[width=.9\linewidth]{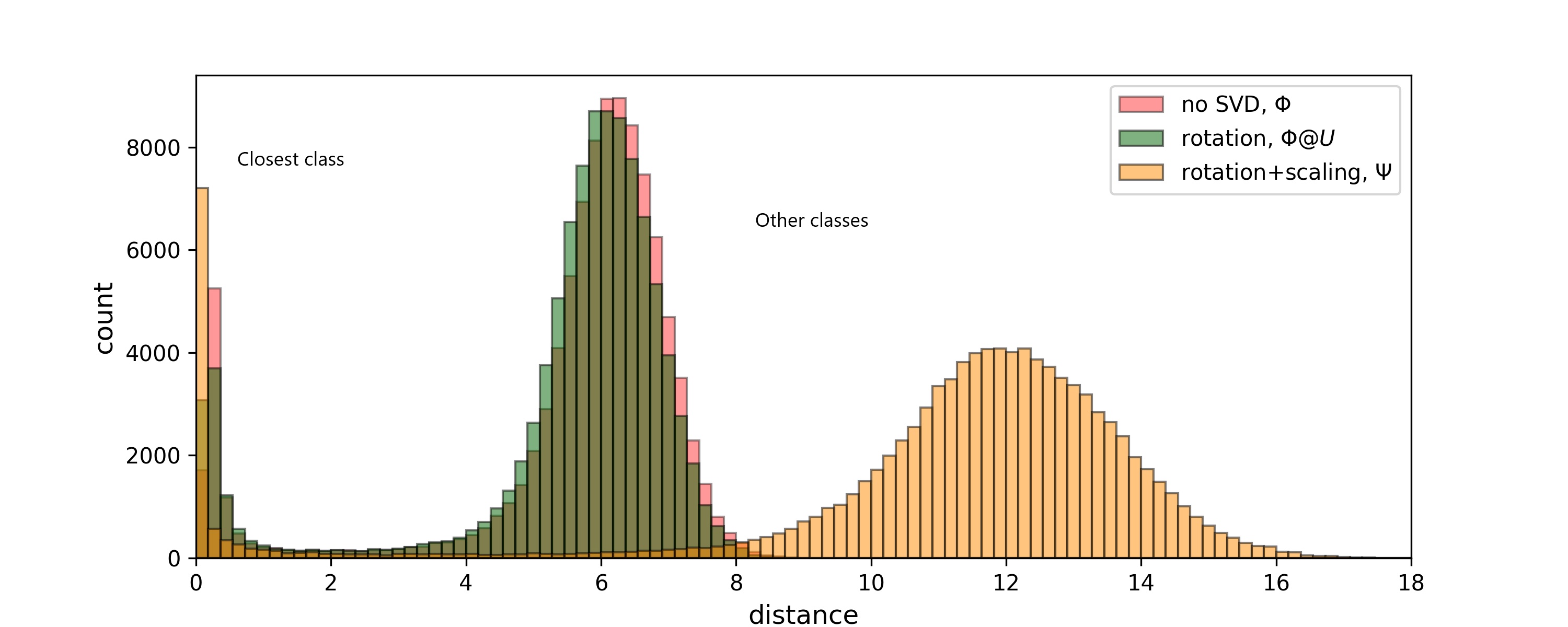}
   \caption{Distribution of distance of testing samples from the convex hull of training set for each class of CIFAR-10 dataset in the feature space learned by a ResNet model.}
  \label{fig:svd_disthull}
\end{figure}

The other benefit obtained from the SVD is that $\Psi$ has a smaller number of dimensions compared to~$\Phi$. For example, a Swin transformer has 1,536 dimensions in $\Phi$ while its corresponding $\Psi$ has only 1,000 dimensions. This makes our optimization problems, regarding the convex hulls and decision boundaries, less expensive. We note that SVD has been used in the last hidden layer of language models before, not to change the geometric arrangements, but to make the computation of softmax scores faster \cite{shim2017svd,chen2019learning}.

\subsection{The intuition behind our geometric approach}

Let us consider a deep network trained for classification of 3 classes. Figure~\ref{fig:2dpartition} depicts the feature space in the $\Psi$ space of such model which we extracted from 3 output classes of a Swin Transformer trained on ImageNet. This figure, not only shows the decision boundaries of the model in this feature space, it also depicts the convex hull of training samples for each class. When classifying a testing sample $x$, a typical model maps the sample to the $\Psi$ space, and then, determines which partition the $x_\psi$ belongs to. The partitioning step is performed via equation~\eqref{eq:psi_output} where partitions are separated by a finite set of hyperplanes \cite{yousefzadeh2022feature}.

\begin{figure}[h]
  \centering
   \includegraphics[width=.8\linewidth]{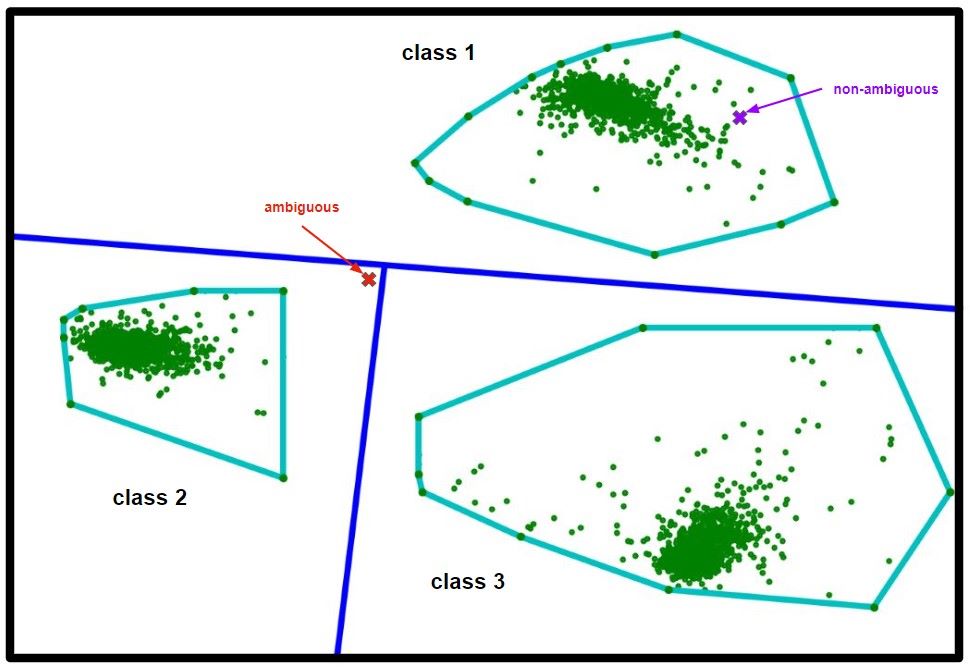}
   \caption{Partitioning of a 2D domain via \textcolor{blue}{decision boundaries} and the \textcolor{cyan}{convex hull} of training set for each class. }
  \label{fig:2dpartition}
\end{figure}

The linear operation performed via equation~\eqref{eq:psi_output} partitions the feature space via a set of hyperplanes. Since $W_\phi$ and $b_\phi$ are fixed for a trained model, the partitioning of the feature space (i.e., the location of decision boundaries) is also fixed. It follows that once we compute the $x_\psi$ for a given $x$, equation~\eqref{eq:psi_output} merely determines which partition the $x_\psi$ has fallen within. This, however, neglects the rich geometric information available in the feature space, most importantly the relationship between the $x_\psi$ and the convex hull of training set for each class. The other valuable information in the feature space is the arrangement with regard to the decision boundaries.

To make our point more clear, consider the \textcolor{red}{red} and \textcolor{purple}{purple} points in Figure~\ref{fig:2dpartition}, each of them being a testing sample. The \textcolor{purple}{purple} point falls within the convex hull of class 1 and it is relatively far from the decision boundaries. Being inside the convex hull indicates that this sample falls within the range of training samples for class 1, i.e., the model has some frame of reference for classifying this sample as class 1. Being away from the decision boundaries with other classes signifies that there is little chance of confusion on which partition/class this sample may belong to.

Now, let us consider the \textcolor{red}{red} point in Figure~\ref{fig:2dpartition} which falls in the partition for class 2. This sample is far from the convex hull of training set for class 2, and at the same time, it is very close to the decision boundaries with the other 2 classes. One can argue, quite reasonably, that the model should be less confident about the classification of the \textcolor{red}{red} point compared to the classification of the \textcolor{purple}{purple} point. While the red sample has fallen in the partition for class 2, it does not seem to be closely similar to any of the training samples for class 2. Yet, it can be easily flipped to any of the other classes, if it were slightly perturbed or if the decision boundaries of the model were slightly different than their current configuration.

These two concepts, the relationship with the convex hull of training set and the relationship with the decision boundaries, are the basis for our ambiguity measure.

\section{Computing the distances to the convex hulls and the decision boundaries}

\subsection{Convex hulls}
Assuming that training set has at least one sample for each class, one would be able to compute the distance to the convex hull of each class.

\begin{equation}
    x_\psi^{\mathcal{H},k} = \mathcal{P}^{\mathcal{H}}(x_\psi, \mathcal{H}_\psi^{tr},k),
\end{equation}
where $\mathcal{P}$ is a function that maps the $x_\psi$ to the convex hull of training set for class $k$. Using this projection, we can compute a vector of distances where each of its elements corresponds to one of the classes in training set

\begin{equation}
    d^{\mathcal{H}}_\psi(x) : \forall k \in \{1:n\} , \quad d^{\mathcal{H}}_\psi(x_\psi)[k] =  \| x_\psi - \mathcal{P}^{\mathcal{H}}(x_\psi, \mathcal{H}_\psi^{tr},k) \|_F,
\end{equation}

We will be particularly interested in the first two smallest values of this vector.

To compute the projection of a sample to the convex hull of a set of points, we use the approximation algorithm by \cite{har2016space,blum2019sparse} which has a guarantee for small error and a running time of $\mathcal{O}(|\mathcal{D}|\frac{n}{\bar{\epsilon}^2})$ where $|\mathcal{D}|$ is the number of points in the convex hull, $n$ is the number of dimensions, and $\bar{\epsilon}$ is a parameter to bound the error. We note that the problem of projecting to a convex hull is convex and there are exact algorithms that can solve it reasonably fast \cite{nocedal2006numerical}, but not as fast as the approximation algorithm.

In practice, solving this optimization problem in a 1,000 dimensional feature space of a Swin Transformer trained on Imagenet takes about 100 milliseconds on a GPU.

\subsection{Decision boundaries} \label{sec:dbs}


\subsubsection{Existence, intersections, and properties}  \label{sec:dbs_properties}

Here, we are concerned with the decision boundaries in the compressed feature space $\Psi$. For simplicity of notations, we rewrite the equation~\eqref{eq:psi_output} as $$z = W_\psi x_\psi + b_\phi,$$ where $W_\psi = U \Sigma$.

The decision boundary between any two classes $i$ and $j$ in the compressed feature space, denoted by $x^{f(i,j)}_\psi$ can be defined as the set of points satisfying the following two conditions
\begin{equation} \label{eq:db_con1}
    W_\psi [i] x^{cf(i,j)}_\psi +  b_\phi[i] = W_\psi [j] x^{f(i,j)}_\psi +  b_\phi[j]
\end{equation}
\begin{equation} \label{eq:db_con2}
    W_\psi [i] x^{cf(i,j)}_\psi +  b_\phi[i] \geq W_\psi [k] x^{cf(i,j)}_\psi +  b_\phi[k] \quad , \quad \forall k \notin \{ i,j \}
\end{equation}

These two equations define a subspace within the $\Psi$, but it is important to realize that this space may be empty. If the subspace is empty for the decision boundary between two specific classes, it means that those classes do not have an interface. 

Theoretically, it is possible for all class pairs to have an interface even when $n>>f$. 

The following is wrong. Talk about the dimensions of subspace. relate that to n and f.

Theoretically, as long as $n \leq f$, it is possible for each pair of classes to have an interface, i.e., existence of all decision boundaries is possible, but not necessarily guaranteed. However, when we have $f < n$, it may not be possible for some of the classes to have a direct interface with each other. Rather, the first class may be separated from the second classes via other intermediary classes. This is not necessarily undesirable as it would mean there is little chance of confusion between the two classes. For example, a model trained on Imagenet might have a decision boundary between two different types of fish, but perhaps, it may not have a decision boundary between the planes and alligators. Hence, it would be informative to understand and verify which classes do or do not have decision boundaries between them.

In an $f$-dimensional ambient space, the decision boundary between two classes may be a hyperplane of $f-1$ dimensions. These decision boundaries eventually intersect with each other as well as with the boundaries of the domain. The intersection of two or more hyperplanes may be a non-empty subspace of lower dimensions defining the decision boundary between multiple classes.

In our compressed feature space, dimensionality of feature space is always equal to the number of classes, i.e., $f=n$. We note that in practice almost all models have architectures with $f>n$ in their last hidden layer. There are some architectures that have $f<n$, e.g., certain variations of Swin Transformers, however, the accuracy of those architecture have proved have proved to be lower compared to accuracy of equivalent architectures with $f>n$.

A separate paper provides a theoretical analysis and discussion on the existence, intersections, and properties of decision boundaries in the feature space. Moreover, we provide a fast algorithm that can project the samples to the decision boundaries of trained models.

\subsubsection{Projecting a point to the decision boundaries}

Given a sample $x$ with classification $i = \mathcal{C}(x)$, we would be interested to project the $x$ to the closest point on the decision boundaries with other classes. This is interesting because it would tell us how far a given sample is located from the decision boundaries of other classes, and it would also give us the direction the sample has to take to reach those decision boundaries. Perhaps the direction that a sample has to take to reach a decision boundary is very different than the direction it has to take to reach the convex hull of training set. 

To project the point $x_\psi$ to the closest point on the decision boundary between classes $i$ and $j$, we minimize the following objective

\begin{equation} \label{eq:flip_objective}
    \min_{x^{cf(i,j)}_\psi} \| x^{cf(i,j)}_\psi - x_\psi \|_2
\end{equation}
subject to the constraints in equations~\eqref{eq:db_con1}-\eqref{eq:db_con2}. In our notation, $x^f$ denotes flip points while $x^{cf}$ denotes closest flip points (closest to some reference $x$). In practice, solving this optimization problem in a 1,000 dimensional feature space of a Swin Transformer trained on Imagenet takes about 50 milliseconds on a GPU.

\subsubsection{Special cases}

As we noted earlier, a direct interface might not exist between certain classes in which case the optimization problem above will become infeasible. One easy way to verify the infeasibility of our optimization problem is to solve its dual form. When the dual form is unbounded, the primal problem is guaranteed to be infeasible.

When the problem is infeasible, we can relax the formulation aiming for a proxy point that would tell us how far $x_\psi$ has to move in order to flip to the other class $j$. Since classes $i$ and $j$ do not have a direct interface, it means that $x_\psi$ has to flip from class $i$ to some other class(es) before eventually flipping to the class $j$. This implies a farther distance, hence, we seek to find a point on the interface with class $j$ and not necessarily with class $i$. For this, we rewrite the equations~\eqref{eq:db_con1}-\eqref{eq:db_con2} as

\begin{equation}
    W_\psi [j] x^{cf(i,j)}_\psi +  b_\phi[j]   \geq   W_\psi [k] x^{cf(i,j)}_\psi +  b_\phi[k] \quad , \quad \forall k \notin \{j\},
\end{equation}
and subject the objective function~\eqref{eq:flip_objective} to this relaxed constraint.

It is possible to generalize this to decision boundaries between the class $i$ and multiple classes specified in $\gamma$. In this case, we can minimize equation~\eqref{eq:flip_objective} subject to the following constraint
\begin{equation}
    W_\psi [i] x^{cf(i,\gamma)}_\psi +  b_\phi[i] = W_\psi [j] x^{f(i,\gamma)}_\psi +  b_\phi[j]  \quad , \quad \forall j \in \gamma
\end{equation}

\begin{equation}
    W_\psi [i] x^{cf(i,\gamma)}_\psi +  b_\phi[i] \geq W_\psi [k] x^{cf(i,\gamma)}_\psi +  b_\phi[k] \quad , \quad \forall k \notin \{ i,\gamma \}
\end{equation}

\subsubsection{Calculating the vector of distances}

To formalize the projection to the decision boundaries as a function, we use the following notation

\begin{equation} \label{eq:project_db}
    x^{cf(i,\gamma)}_\psi = \mathcal{P}^{f}(x_\psi,\gamma),
\end{equation}
where $\gamma$ denotes a set of classes that we want to find their joint decision boundary with class $i$.

Using the function in equation~\ref{eq:project_db}, we can find the distance between a point and the decision boundaries of all other $n-1$ classes. This ultimately gives us a vector of $n-1$ distances to each of the decision boundaries

\begin{equation}
    d^{f}_\psi(x_\psi): \forall k \in \{1:n \backslash i\} , \quad d^{f}_\psi(x_\psi)[k] =  \| x_\psi - \mathcal{P}^{f}(x_\psi,k) \|_F,
\end{equation}
while $d^{f}_\psi(x_\psi)[i] = 0$.


\section{Characterising the unknowns of the models}

Let us first characterize the notion of the unknowns of a trained model within its learned feature space.

\subsection{Boundaries of the domain in feature space}

In the pixel space, boundaries of the domain are well known. We can think of the pixel space as a hypercube of unit 1 dimensions as it is customary to normalize the pixel values to be between 0 and 1 or some other values. 

\begin{remark}
    While we consider the pixel domain as a continuous hypercube, we note that pixel space is usually a discrete space as pixel values have representations such as UINT8. In any case, the assumption of continuous space does not affect our conclusions as the discrete space will be a subset of continuous space.
\end{remark}




To realize the limits of the feature space $\Psi$, we can derive the following based on the Hölder's inequality
\begin{equation} \label{eq:bound_psi}
\begin{split}
    \| x_\psi \|_\infty &= \| V^T x_\phi \|_\infty \\ 
    & \leq \| V^T \|_{2,\infty}  \| x_\phi \|_2 \\
    & = \| x_\phi \|_2,
\end{split}
\end{equation}
since $V^T$ (from the SVD) is orthonormal, and therefore, $\| V^T \|_{2,\infty} = 1$.

Now, we have to identify the limits of the $\| x_\phi \|_2$ as the output of the $\mathcal{M}_\phi (.)$. For this, we can solve the following optimization problem
\begin{equation} \label{eq:bound_phi}
    \max_{x} \| \mathcal{M}_\phi(x) \|_2,
\end{equation}
subject to $x$ being within the boundaries of the pixel space. The maximum value of this objective function will give us the upper bound on $\| x_\phi \|_2$ which can then be plugged back into equation~\eqref{eq:bound_psi} to find the limit of the hypercube that defines our feature space $\Psi$. The hypercube is centered at the origin while each dimension of it is $2\| x_\psi \|_\infty$.

For solving the optimization problem in equation~\eqref{eq:bound_phi}, we note that $\mathcal{M}_\phi$ is a function of all the layers preceding the last layer. Depending on the architecture of the model, these layers could include ReLU layers, pooling layers, normalization layers, attention layers, etc. This composition of function as a whole is a non-concave function. Nevertheless, we can solve our maximization problem using standard optimization tools, perhaps the same algorithms that are used for training the models.

\begin{remark}
    To directly identify the limits of $\Phi$ (instead of $\Psi$), and avoid the compression performed via SVD, the norm of the objective function in equation~\eqref{eq:bound_phi} can be changed from the 2-norm to infinity norm.
\end{remark}

\begin{remark}
    For some specific models, the feature space $\Phi$ is strictly non-negative, e.g., when a model has a ReLU layer at the very end. Most models do not have that property and their last hidden layer may have positive and negative values.
\end{remark}


\subsection{Unknowns may constitute a large portion of the domain of a trained model} \label{sec: unknowns_large}

The arguments regarding the unknowns of the models are generally applicable to the pixel space as much as the feature space. In both spaces, the convex hull of training set usually  occupies a small fraction of the bounded domain. Testing samples are outside the \hull, yet their distance from the \hull usually does not exceed 30\% of the diameter of the \hull \cite{yousefzadeh2021hull}. For some models, this percentage is less than 5\% in the feature space.

At the same time, vast areas of the domain may be unrelated to the scope of a given model whether the domain is the pixel space or for other types of machine learning applications \cite{cao2023transparency}. For example, if we consider the pixel space for a model trained on the CIFAR-10 dataset, the same pixel space contains all sorts of images unrelated to the model's scope, e.g., radiology images, images of other object types, etc. All these irrelevant images exist in that same domain. What has become known as the adversarial inputs are also in that same domain.


To get a handle on how much of the domain is covered by the convex hull of training set, we estimate its volume using the approximation algorithm by \cite{simonovits2003compute}. This is one of the fastest approximation methods available \cite{lovasz2006simulated} and it makes our computation significantly faster since we are dealing with high dimensions and high number of samples. The running time of this algorithm is $\mathcal{O}(\bar{n}^4)$ where $\bar{n}$ is the number of dimensions.


\subsection{Existence of cases of improper confidence}

A simple example of improper over-confidence is the adversarial inputs which has been the subject of extensive, yet inconclusive, studies over the past few years. Another example is the case of out-of-distribution inputs where a model may confidently classify images that are unrelated to the scope of its training, e.g., an object detection model may classify a radiology image of liver as an aeroplane with 100\% confidence. First, let us provide a formulation which can quickly give us egregious cases of over confidence for a model:

\begin{equation}
    \max_{\hat{x}_\psi} \| x_\psi - \hat{x}_\psi \|_2
\end{equation}

subject to:
\begin{equation}
\begin{gathered}
    \hat{x}_\psi \in \Psi,\\ 
    \hat{x}_\psi \notin \mathcal{H}^{tr},\\ 
    softmax(U \Sigma \hat{x}_\psi + b_\phi)[i] > 0.9
\end{gathered}
\end{equation}

Solving this formulation is not hard, especially if we use a good initial solution that is inside the partition $i$ yet far from $x_\psi$. We leave the numerical examples for the future.










\subsection{Establishing the closeness to decision boundaries as the indicator of model's confidence}

In this section, we prove that only in the vicinity of decision boundaries a model has low confidence, and in all other regions of the domain, a model is guaranteed to have high confidence.

\begin{theorem} \label{theo:bound_delta_low}
    Given an input $x$ for a trained model $\mathcal{M}(.)$, if in the feature space of $\mathcal{M}$, the distance of $x$ from the closest decision boundary is at least $\delta$,
    $$\forall x \in \Phi: d^{f,min}_\phi (x) \geq \delta,$$
    then, the confidence (top softmax score) of $M(.)$ in classifying $x$ will be at least
    $$\max softmax(\mathcal{M}(x)) \geq \frac{ e^{\delta  \| W[j] - W[i] \|_2} }{ e^{\delta  \| W[j] - W[i] \|_2} + (n-1) },$$ where $W$ is the weight matrix used for the classification of the feature space, $n$ is the number of classes, $i$ is the classification of $x$, and $j$ is the class with closest decision boundary.
\end{theorem}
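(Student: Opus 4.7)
The plan is to translate the geometric hypothesis on decision-boundary distances into a pointwise lower bound on every logit gap $z_i - z_k$, and then feed those gaps into the softmax. The starting identity is
\begin{equation*}
    \max \mathrm{softmax}(\mathcal{M}(x)) \;=\; \frac{e^{z_i}}{\sum_{k=1}^{n} e^{z_k}} \;=\; \frac{1}{1 + \sum_{k \neq i} e^{-(z_i - z_k)}},
\end{equation*}
so it suffices to upper bound $\sum_{k \neq i} e^{-(z_i - z_k)}$, which itself reduces to lower bounding each gap $z_i - z_k$.

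First I would observe that the decision boundary between classes $i$ and $k$ in the feature space is the hyperplane $\{x_\psi : (W[i] - W[k])\, x_\psi + b_\phi[i] - b_\phi[k] = 0\}$, and the Euclidean distance from $x$ to this hyperplane is $(z_i - z_k)/\|W[i] - W[k]\|_2$, which is positive since $i$ is the predicted class. The assumption that the distance from $x$ to the closest decision boundary is at least $\delta$ therefore gives, for every $k \neq i$,
\begin{equation*}
    z_i - z_k \;\geq\; \delta\, \|W[i] - W[k]\|_2,
\end{equation*}
and hence $\sum_{k \neq i} e^{-(z_i - z_k)} \;\leq\; \sum_{k \neq i} e^{-\delta\, \|W[i] - W[k]\|_2}$.

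To collapse this sum into the closed form $(n-1)\, e^{-\delta\, \|W[j] - W[i]\|_2}$ appearing in the theorem, I would use the uniform inequality $\|W[i] - W[k]\|_2 \geq \|W[i] - W[j]\|_2$ for all $k \neq i$. Granted this, the sum is bounded by $(n-1)\, e^{-\delta\, \|W[j] - W[i]\|_2}$, and multiplying numerator and denominator of $1/(1 + (n-1)\, e^{-\delta\, \|W[j] - W[i]\|_2})$ by $e^{\delta\, \|W[j] - W[i]\|_2}$ produces exactly the fraction in the claim. So once the uniform inequality is in hand, the rest is a routine algebraic rearrangement.

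The main obstacle is precisely justifying this uniform bound on $\|W[i] - W[k]\|_2$. The class $j$ realizing the geometrically closest decision boundary need not simultaneously minimize $\|W[i] - W[k]\|_2$ over $k \neq i$, since the distance to the boundary with $k$ mixes the weight-vector separation with the logit margin through $(z_i - z_k)/\|W[i] - W[k]\|_2$. The cleanest resolution, which I would adopt, is to interpret $j$ in the statement as the index minimizing $\|W[i] - W[k]\|_2$ (equivalently, replacing $\|W[j] - W[i]\|_2$ in the theorem by $\min_{k \neq i}\|W[i] - W[k]\|_2$); with that reading the uniform inequality is immediate and the chain of bounds above delivers the stated confidence lower bound.
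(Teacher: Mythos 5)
Your argument is sound and reaches the stated bound by a genuinely different route from the paper's. The paper works only with the single closest boundary: it sets $v = x_\phi^f - x_\phi$, uses $z^f[i]=z^f[j]$ together with the perpendicularity of $v$ to the separating hyperplane to get $z[i]-z[j]=\delta\,\|W[j]-W[i]\|_2$, and then bounds the softmax denominator by $e^{z[i]}+(n-1)e^{z[j]}$. You instead lower-bound every gap $z_i-z_k$ simultaneously via the point-to-hyperplane distance formula and then collapse the sum $\sum_{k\neq i}e^{-(z_i-z_k)}$. One point to tighten in your version: the hypothesis controls the distance to the decision boundaries, which are subsets of the hyperplanes (cut down by the constraints in equation~\eqref{eq:db_con2}), not the distance to the full hyperplanes. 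The conclusion still follows, because $x$ lies in the interior of the cell of class $i$ whose boundary is the union of those decision boundaries, so the segment from $x$ to the foot of the perpendicular on any hyperplane $H_{ik}$ must first cross that cell boundary; hence $\mathrm{dist}(x,H_{ik})\geq d^{f,min}_\phi(x)\geq\delta$ for every $k\neq i$ and your inequality $z_i-z_k\geq\delta\,\|W[i]-W[k]\|_2$ is valid. This deserves a sentence in a final write-up.

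The ``main obstacle'' you isolate is real, and your route exposes the mirror image of it in the paper's own proof: the step $\sum_k e^{z[k]}\leq e^{z[i]}+(n-1)e^{z[j]}$ silently assumes that the closest-boundary class $j$ also carries the second-largest logit, which is precisely what the paper's later remark says need not hold. A three-class example with $\|W[i]-W[j]\|_2$ large and $\|W[i]-W[k]\|_2$ small (so that the boundary with $j$ is geometrically closest while $z_k>z_j$) shows the displayed bound can fail when $j$ is taken literally as the closest-boundary class. Your repair --- reading $j$ as $\arg\min_{k\neq i}\|W[i]-W[k]\|_2$, equivalently replacing $\|W[j]-W[i]\|_2$ by $\min_{k\neq i}\|W[i]-W[k]\|_2$ --- is the right one: it makes the uniform inequality immediate and recovers exactly the class-agnostic corollary stated afterwards in terms of $\rho(W)$. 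With that reading, your chain of bounds is a complete and rigorous proof.
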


\begin{proof}
    We know $x$ is not on any decision boundaries, hence it will have a specific classification. Let us denote the classification of $x$ to be some class $i=\mathcal{C}(x)$. Without any loss of generality, we consider the feature space to be $\Phi$ while the same proof will apply to $\Psi$ as well.
    
    Let us denote the point on the closest decision boundary as $x_\phi^f$. Such decision boundary will be a flip between class $i$ and at least one other class. We denote the other class with $j$ while noting that $j$ may not be unique. From the theorem statement, we have $$\| v\|_2 = \delta \quad , \quad v = x_\phi^f - x_\phi.$$

    From equation~\eqref{eq:output_phi}, we can write $z = W x_\phi + b$ and $z^f = W x_\phi^f + b$. Plugging the definition of $v$ will give us $$z = W (x_\phi^f - v) + b = W x_\phi^f - W v +b = z^f - W v.$$    
    
    We have $z^f[i] = z^f[j]$ since $x^f$ is a flip point between classes $i$ and $j$. Based on this, we can write
\begin{equation} \label{eq:bound_delta}
\begin{split}
    z[i] - z[j] &= z^f[i] - z^f[j] - (W[i] v - W[j] v) \\
    &= 0 - (W[i] v - W[j] v) \\
    &= (W[j] - W[i]) v \\
    &= \| W[j] - W[i] \|_2 \| v \|_2 \cos{\theta} \\
    &= \delta  \| W[j] - W[i] \|_2,
\end{split}
\end{equation}
where $W[i]$ denotes the $i^{th}$ row of $W$, and $\theta$ is the angle between the vectors $W[j] - W[i]$ and $v$. It is easy to show that these two vectors are always parallel leading to $\cos{\theta} = 1$. To realize this, note that the decision boundary between classes $i$ and $j$ is a hyperplane with coefficients $W[j] - W[i]$. Since this hyperplane is the closest decision boundary to $x_\phi$, no other decision boundary can block the access of $x_\phi$ to this hyperplane, hence, $v$, the direction to the closest point on this hyperplane should be perpendicular to the hyperplane. Being perpendicular to a hyperplane means having the same direction as its vector of coefficients which makes $v$ parallel to $W[j] - W[i]$.

Now, via equation~\eqref{eq:bound_delta}, we have the difference between the elements $i$ and $j$ of the logit vector $z$. Hence, we can derive a bound on the softmax score (confidence) of $\mathcal{M}(x)$. The top softmax score for $x$, corresponding to class $i$, is
\begin{equation} \label{eq:bound_softmax}
\begin{split}
    \max softmax(x) &= \frac{e^{z[i]}}{\sum\limits_{\forall k \in \{1:n \} } e^{z[k]}} \\
    &\geq \frac{e^{z[i]}}{e^{z[i]} + (n-1) e^{z[j]}} \\
    &= \frac{e^{z[j]} e^{\delta  \| W[j] - W[i] \|_2} }{e^{z[j]} e^{\delta  \| W[j] - W[i] \|_2} + (n-1) e^{z[j]}} \\
    &= \frac{ e^{\delta  \| W[j] - W[i] \|_2} }{ e^{\delta  \| W[j] - W[i] \|_2} + (n-1) }
\end{split}
\end{equation}



\end{proof}

\begin{corollary}
    Given a specific model with known $W$, the bound in equation~\eqref{eq:bound_softmax} can be reduced to the following class agnostic bound
    \begin{equation}
        \max softmax(\mathcal{M}(x)) \geq \frac{ e^{\delta  \rho(W)} }{ e^{\delta  \rho(W) } + (n-1) },
    \end{equation}
    where 
    \begin{equation} \label{eq:rho_w}
        \rho(W) = \min_{i \neq j} \| W[j] - W[i] \|_2, \forall i,j \in \{1:n \}.
    \end{equation}
\end{corollary}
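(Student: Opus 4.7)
The plan is to derive the corollary directly from Theorem~\ref{theo:bound_delta_low} by replacing the pair-specific quantity $\|W[j]-W[i]\|_2$ with the global minimum $\rho(W)$, and then invoking monotonicity of the scalar map that turns logit gaps into softmax lower bounds.

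First, I would fix an arbitrary $x$ and let $i=\mathcal{C}(x)$, and let $j$ be the class whose decision boundary is closest to $x_\phi$, exactly as in Theorem~\ref{theo:bound_delta_low}. The theorem already gives
\begin{equation*}
\max softmax(\mathcal{M}(x)) \;\geq\; \frac{e^{\delta \| W[j]-W[i]\|_2}}{e^{\delta \| W[j]-W[i]\|_2}+(n-1)}.
\end{equation*}
By the definition in equation~\eqref{eq:rho_w}, we have the inequality $\|W[j]-W[i]\|_2 \geq \rho(W)$, and since $\delta \geq 0$ this yields $\delta\|W[j]-W[i]\|_2 \geq \delta \rho(W)$.

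Next, I would show that the univariate function
\begin{equation*}
g(t) \;=\; \frac{e^{t}}{e^{t}+(n-1)}
\end{equation*}
is monotonically non-decreasing on $\mathbb{R}$ for $n\geq 1$. A one-line derivative check gives $g'(t)=\tfrac{(n-1)\, e^{t}}{(e^{t}+(n-1))^{2}}\geq 0$, so $g$ is non-decreasing. Applying $g$ to the inequality $\delta\|W[j]-W[i]\|_2\geq \delta\rho(W)$ then lowers the right-hand side of the theorem's bound while preserving its direction, giving
\begin{equation*}
\max softmax(\mathcal{M}(x))\;\geq\; g\bigl(\delta\|W[j]-W[i]\|_2\bigr)\;\geq\; g\bigl(\delta\rho(W)\bigr)\;=\;\frac{e^{\delta\rho(W)}}{e^{\delta\rho(W)}+(n-1)},
\end{equation*}
which is precisely the class-agnostic bound claimed in the corollary.

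Since this argument depends on nothing about $x$ beyond what the theorem already required, the bound holds uniformly over all inputs and over all class pairs simultaneously. There is no real obstacle here: the only nontrivial step is verifying monotonicity of $g$, which is immediate. A mild subtlety worth flagging in the write-up is that $\rho(W)$ as defined is well-posed only when $n\geq 2$, since otherwise the minimum is over an empty index set; but in the classification setting of interest we always have $n\geq 2$, and when $n=1$ the statement is trivial because the softmax is identically $1$.
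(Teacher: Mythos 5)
Your proof is correct and follows essentially the same route as the paper's: both arguments reduce to the fact that $\rho(W)$ minorizes $\|W[j]-W[i]\|_2$ over all pairs and that $t \mapsto e^{t}/(e^{t}+c)$ is monotonically increasing, so substituting the smaller exponent only weakens the lower bound. Your version is slightly more careful (explicitly noting $\delta \geq 0$ and the degenerate $n=1$ case), but the substance is identical.
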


\begin{proof}
    Computing the $\rho(W)$ is straightforward via equation~\eqref{eq:rho_w}. This equation provides the smallest possible value for $\| W[j] - W[i] \|_2$ for any choice of $i \neq j$. We previously established that $i$ and $j$ cannot be identical. Furthermore, the function $\frac{e^x}{e^x+c}$ is monotonically increasing with respect to $x$ because its derivative w.r.t. $x$ is strictly positive. Therefore, the minimum value of $\| W[j] - W[i] \|_2$ will provide a lower bound for the right hand side of equation~\eqref{eq:bound_softmax}.
\end{proof}

We can extend the Theorem~\ref{theo:bound_delta_low} to derive an upper bound on the confidence of the model based on the distance of inputs from the decision boundaries.

\begin{theorem} \label{theo:bound_delta_up}
    Given an input $x$ for a trained model $\mathcal{M}(.)$, if in the feature space of $\mathcal{M}$, the distance of $x$ from the closest decision boundary is at most $\delta$,
    $$\forall x \in \Phi: d^{f,min}_\phi (x) \leq \delta,$$
    then, the confidence (top softmax score) of $M(.)$ in classifying $x$ will be at most
    \begin{equation}
        \max softmax(\mathcal{M}(x)) \leq \frac{ e^{\delta  \rho'(W)} }{ e^{\delta  \rho'(W) } + 1 },
    \end{equation}
    where 
    \begin{equation} \label{eq:rho_w_max}
        \rho'(W) = \max_{i \neq j} \| W[j] - W[i] \|_2, \forall i,j \in \{1:n \}.
    \end{equation}
\end{theorem}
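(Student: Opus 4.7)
The plan is to mirror the proof of Theorem~\ref{theo:bound_delta_low}, reversing the direction of each inequality so that the hypothesis $\|v\|_2 \leq \delta$ propagates into an upper rather than a lower bound on the top softmax score. First I would adopt the same setup: let $i = \mathcal{C}(x)$, let $x_\phi^f$ be the closest point to $x_\phi$ on a decision boundary (now known to lie within $\delta$ of $x_\phi$), let $j$ be a class on the opposite side of that boundary, and put $v = x_\phi^f - x_\phi$ so that $\|v\|_2 \leq \delta$.

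Next I would reuse the algebraic identity and the perpendicularity argument that produced equation~\eqref{eq:bound_delta}: the closest decision hyperplane has normal $W[j]-W[i]$, and since no other hyperplane can block the shortest route from $x_\phi$ to it, $v$ is parallel to this normal. This gives $z[i]-z[j] = (W[j]-W[i])\,v = \|W[j]-W[i]\|_2\,\|v\|_2$, and combining $\|v\|_2 \leq \delta$ with the definition of $\rho'(W)$ yields $z[i]-z[j] \leq \rho'(W)\,\delta$.

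The final step is where the proof structurally diverges from the lower-bound case. Rather than enlarging the softmax denominator by replacing each competing $e^{z[k]}$ with $e^{z[j]}$ (which is what introduced the $n-1$ factor earlier), I would shrink it by discarding every term except $e^{z[i]}$ and $e^{z[j]}$; the discarded terms are nonnegative, so the ratio only grows. This produces $e^{z[i]}/(e^{z[i]}+e^{z[j]}) = e^{z[i]-z[j]}/(e^{z[i]-z[j]}+1)$, and monotonicity of $t \mapsto e^t/(e^t+1)$ lets me substitute $z[i]-z[j] \leq \rho'(W)\,\delta$ to obtain the claimed bound.

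The main obstacle is purely directional rather than technical: three separate orientation choices — taking a maximum in $\rho'$, upper-bounding $z[i]-z[j]$, and shrinking rather than enlarging the softmax denominator — must all be made consistently, since reversing any one of them would yield a vacuous or wrong-sign inequality. Once those orientations are aligned, the argument is a straightforward dual of the proof of Theorem~\ref{theo:bound_delta_low}.
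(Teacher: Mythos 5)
Your proof is correct and is precisely the dualization the paper has in mind---the paper omits this proof entirely (``similar to the previous proofs and is left out for brevity''), and your three consistently reversed orientations (max in $\rho'$, upper bound on $z[i]-z[j]$, shrinking the softmax denominator to two terms) fill it in faithfully. One small simplification worth noting: in this direction you do not even need the perpendicularity argument inherited from Theorem~\ref{theo:bound_delta_low}, since Cauchy--Schwarz already gives $z[i]-z[j] = (W[j]-W[i])\,v \le \|W[j]-W[i]\|_2\,\|v\|_2 \le \delta\,\rho'(W)$, which is the only inequality the upper bound requires.
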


\begin{proof}
    Proof is similar to the previous proofs and is left out for brevity.
\end{proof}

We can use the above theorems in two ways. First, we can consider a specific lower bound for the confidence and determine what distance should the samples maintain from the decision boundaries for a given confidence lower bound. This can then be used to estimate for what fraction of the domain a model has high confidence. Second, if we know how far the samples are from the decision boundaries, we can compute the minimum confidence of the model for those samples.

To make this relationship more tangible, let us consider two examples.

\begin{example}
    Consider a ViT model \cite{} trained on the CIFAR-10 dataset. We obtain the $\rho(W)$ for this model as $0.876$. Since the model has 10 classes, we have $n=10$. The relationship between the confidence of the model and the distance to the decision boundaries becomes $$ \max softmax(\mathcal{M}(x)) \geq \frac{ e^{0.876 \delta} }{ e^{0.876 \delta} + 9 }.$$ Feature space of this model is a 768-dimensional hypercube. To give a perspective about distances in the feature space of this model, we note that its diameter has size 110. The diameter of $\mathcal{H}^{tr}_{\phi}$ is 10.40. The average diatance to the decision boundaries is $5$. Given the bound above, choosing the distance threshold as $\delta = 3$ yields $$\max softmax(\mathcal{M}(x)) \geq 60\%.$$ This means that any sample that with distance to decision boundaries $\geq 3$ will be classified by the model with softmax score of at least 60\%. Alternatively, if we require the confidence to be at least 90\%, the distance from the decision boundaries should be at least $5.07$. Note that these are lower bounds, and there may exist points with high confidence that are closer to the decision boundaries. However, as long as the $5.07$ distance is maintained from the decision boundaries for any input, the confidence of the ViT model for the input is guaranteed to be at least 90\%.
    
    On the other hand, using Theorem~\ref{theo:bound_delta_up}, we obtain $\rho'(W) = 0.9692$ leading to $$ \max softmax(\mathcal{M}(x)) \leq \frac{ e^{0.9692 \delta} }{ e^{0.9692 \delta} + 1 },$$ which implies that for any sample with $d^{f,min}_\phi \leq 0.42$, the top softmax score of the model cannot be larger than 60\%.
\end{example}



\begin{remark}
    The theorem and the corollary proved above hold all over the domain inside and outside the convex hull of training set. Therefore, it is only the closeness to decision boundaries that can explain the low confidence of a model when classifying an input.
\end{remark}

\begin{remark}
    From equation~\eqref{eq:bound_delta}, we can conclude that the closest class to a given input is not necessarily the class with the second largest softmax score.
\end{remark}


\subsection{Characterizing the decision boundaries and the fraction of the domain occupied by them}

In previous section, we established that closeness to decision boundaries is the only source of low confidence for a model. We now aim to identify the decision boundaries and gauge the fraction of the domain occupied by them. A certain offset from the decision boundaries will indicate the fraction of the domain that a model would have low confidence.


\begin{theorem} \label{theo:volume_polytope}
    Given a hypercube $\Phi = [0,1]^{n}$ with $n \geq 2$ as the feature space of a classification model where $W$ and $b$ define the model's output via equation~\eqref{eq:output_phi}, the decision boundary between any two classes $i$ and $j$, denoted by $\Delta^{i,j}_\phi$, is a polytope of $\leq n-1$ dimensions. Volume of this polytope can be computed via Algorithm~\ref{alg:db_polytope} and it is upper bounded by 
    \begin{equation} \label{eq:volume_polytope}
    \begin{split}
        &vol(\Delta^{i,j}_\phi) \leq \\
        &\frac{\| W[i]-W[j] \|_2}{(n-1)! \prod\limits_{k=1}^n (W[i,k]-W[j,k])} \sum_{K \subseteq \{1,\dots, n \}} (-1)^{|K|} \big( \max ( 0 , -b[i] + b[j] - w . \mathbbm{1}_K ) \big)^{n-1}.
    \end{split}
    \end{equation}
\end{theorem}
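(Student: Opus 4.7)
The plan is to first realize $\Delta^{i,j}_\phi$ as (a subset of) a hyperplane slice of the unit cube, and then apply a standard coarea plus inclusion--exclusion argument to pin down the $(n-1)$-dimensional volume of that slice in closed form.

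I would begin by setting $w = W[i] - W[j] \in \mathbb{R}^n$ and $c = b[j] - b[i] \in \mathbb{R}$, so that condition \eqref{eq:db_con1} rewrites as the affine equation $w \cdot x = c$, which carves out an $(n-1)$-dimensional hyperplane in $\mathbb{R}^n$ whenever $w \neq 0$. Condition \eqref{eq:db_con2} adds linear half-space constraints demanding that class $i$ still dominate every $k \notin \{i,j\}$; since each such constraint can only cut the slice further, $\Delta^{i,j}_\phi$ is a convex polytope of dimension at most $n-1$, and
\begin{equation*}
\mathrm{vol}_{n-1}(\Delta^{i,j}_\phi) \;\le\; \mathrm{vol}_{n-1}\bigl(\{x \in [0,1]^n : w \cdot x = c\}\bigr).
\end{equation*}
This reduces the theorem to the purely geometric problem of slicing the unit cube by one hyperplane.

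For that slice I would use the coarea identity. Let $F(t) = \mathrm{vol}_n([0,1]^n \cap \{w \cdot x \le t\})$. Because the parallel hyperplanes $\{w \cdot x = t\}$ sit at Euclidean separation $dt / \|w\|_2$, one has $F'(t) = \mathrm{vol}_{n-1}(\{x \in [0,1]^n : w \cdot x = t\}) / \|w\|_2$, so the slice volume equals $\|w\|_2 \, F'(c)$. It remains to write $F$ in closed form. Assume first that every $w_k > 0$; then $\{x \ge 0 : w \cdot x \le t\}$ is an $n$-simplex with $n$-volume $\max(0,t)^n / (n! \prod_k w_k)$. To enforce the upper bounds $x_k \le 1$, I would run inclusion--exclusion over the subsets $K \subseteq \{1,\dots,n\}$ of coordinates that violate $x_k \le 1$: the substitution $x_k = 1 + y_k$ for $k \in K$ gives
\begin{equation*}
F(t) \;=\; \frac{1}{n! \prod_{k=1}^n w_k} \sum_{K \subseteq \{1,\dots,n\}} (-1)^{|K|} \bigl(\max(0,\, t - w \cdot \mathbbm{1}_K)\bigr)^n.
\end{equation*}
Differentiating in $t$ drops the exponent to $n-1$ and converts $n!$ into $(n-1)!$; multiplying by $\|w\|_2$ and evaluating at $t = c = -b[i] + b[j]$ reproduces exactly the expression in \eqref{eq:volume_polytope}. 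Algorithm~\ref{alg:db_polytope} then amounts to evaluating this sum.

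The main obstacle is sign handling when some components $w_k$ are negative, since the simplex argument above presupposes $w_k > 0$. The natural fix is the change of variable $x_k \mapsto 1 - x_k$ for each negative coordinate, which flips the sign of $w_k$ and shifts $c$ by $w_k$; tracking these reflections through the inclusion--exclusion produces exactly the signed denominator $\prod_k (W[i,k] - W[j,k])$ appearing in \eqref{eq:volume_polytope}, with the $\max(0,\cdot)$ terms bookkeeping which half-spaces contribute. A degenerate case where some $w_k = 0$ would need a separate limiting argument, but can be pushed to a measure-zero set of parameters. A secondary, more practical obstacle is that the sum has $2^n$ terms, which is why the statement hands the actual evaluation off to Algorithm~\ref{alg:db_polytope}; this does not affect the validity of the bound, only its computational tractability.
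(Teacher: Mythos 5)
Your proof is correct, and its core reduction is the same one the paper uses: the equality constraint confines $\Delta^{i,j}_\phi$ to a hyperplane, the remaining inequalities from \eqref{eq:db_con2} can only cut that slice down, so the $(n-1)$-volume of $\Delta^{i,j}_\phi$ is at most the $(n-1)$-volume of the full hyperplane section of the cube, and the right-hand side of \eqref{eq:volume_polytope} is exactly that section's volume. Where you genuinely diverge is in how that last fact is established: the paper simply cites the known cube-slicing formula (Ball 1986; Marichal--Mossinghoff 2008), whereas you rederive it from scratch via the coarea identity $V_{n-1}(t)=\|w\|_2\,F'(t)$ together with inclusion--exclusion over the violated upper bounds $x_k\le 1$. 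Your route buys a self-contained argument that makes transparent where the $(n-1)!$, the drop in exponent to $n-1$, and the $\|w\|_2$ prefactor each come from; the paper's route buys brevity and silently absorbs the sign issues that you must handle by hand (reflections for $w_k<0$, a degenerate case for $w_k=0$). Two small corrections. First, Algorithm~\ref{alg:db_polytope} is not an evaluation of the $2^n$-term sum: in the paper it is a simplex-style vertex enumeration of the \emph{constrained} polytope, after which the exact volume is computed by triangulation; the closed-form sum is only the upper bound coming from the unconstrained slice. Second, the case $w_k=0$ is not a measure-zero nuisance you can push aside --- for a fixed trained $W$ it either happens or it does not, and if it happens the stated formula has a zero in the denominator, so one should instead project out the degenerate coordinates and apply the formula in the lower-dimensional cube. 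Neither point affects the validity of the bound you prove.
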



\begin{proof}
    The decision boundary between any two arbitrary classes $i$ and $j$, is a subset of the domain defined by 
    \begin{equation} \label{eq:db_polytope}
    \begin{split}
        x &\in \Phi \\
         (W[i]-W[j])x &= -b[i] + b[j] \\
         W[i] x + b[i] &\geq W[k]x+b[k], \quad \forall k \notin \{i,j\}.
    \end{split}
    \end{equation}
    It is possible for this subset to be empty as we noted before in Section~\ref{sec:dbs_properties} in which case $\Delta^{i,j}_\phi$ would not occupy any portion of the $\Phi$. However, if the subset is non-empty, it will be a convex polytope of at most $n-1$ dimensions. Each vertex of this polytope will be an intersection of the $\Delta^{i,j}_\phi$ with either the boundaries of the $\Phi$ or with the decision boundaries of other classes. These intersections define the vertices of the polytope. To identify the vertices of the $\Delta^{i,j}_\phi$, we can use the simplex method in Algorithm~\ref{alg:db_polytope} inspired by the revised simplex algorithm in \cite{nocedal2006numerical}. This algorithm initially identifies a vertex of the feasible domain, then finds a direction where one of the constraints is binding, and moves in that direction until it finds the next vertex. This will continue until all the vertices are obtained as we know the number of vertices of this polytope is finite and upper bounded by $(n-[\frac{n}{2}])\binom{n}{\frac{n}{2}}$ \cite{o1971hyperplane}. Having all the vertices of $\Delta^{i,j}_\phi$, we can compute its volume analytically using formulas in \cite{lawrence1991polytope,bueler2000exact}.\footnote{One way to perform this computation is to break the polytope into a set of simplexes, and then compute the volume of each simplex using the Gram determinant.}\\
    If $\Delta^{i,j}_\phi$ only intersects with the boundaries of the $\Phi$ and does not have any intersections with other decision boundaries, $\Delta^{i,j}_\phi$ would be the intersection of a hyperplane and a hypercube, and its volume can be computed analytically via the right hand side of equation~\eqref{eq:volume_polytope} derived by \cite{ball1986cube,marichal2008slices}. However, if $\Delta^{i,j}_\phi$ intersects with any other decision boundaries, i.e., the last line of equation~\eqref{eq:db_polytope} becomes binding for one or more instances of $k$, then the volume of $\Delta^{i,j}_\phi$ will be strictly less than the right hand side of equation~\eqref{eq:volume_polytope}.
\end{proof}



\begin{algorithm}[H]
\caption{Identifying the decision boundaries in the feature space}\label{alg:db_polytope}
\begin{algorithmic}[1]
\Require model $\mathcal{M}(.)$ with $W$ and $b$ in its last hidden layer and $\Phi$ as its feature space, classes $i$ and $j$
\Ensure vertices of polytope $\Delta^{i,j}_\phi$ defining the decision boundary between classes $i$ and $j$ in the feature space of $\mathcal{M}(.)$
\State $\Phi = [-a , a]^n \leftarrow$ identify $a$ as the length of $\Phi$ by solving equation~\eqref{eq:bound_phi} with infinity norm
\State initialize $\mathcal{S} =\emptyset$
\State $\mathcal{S} \leftarrow$ identify $s$ as a basic feasible solution of equations~\eqref{eq:db_polytope}
\State $d \leftarrow$ identify a direction from $s$ along which one constraint in equation~\eqref{eq:db_polytope} is binding
\While{$d \neq \emptyset$}
    \State $\mathcal{S} \leftarrow$ move $s$ along $d$ as far as equations~\eqref{eq:db_polytope} are satisfied
    \State $d \leftarrow$ identify a direction from $s$ along which one new constraint in equation~\eqref{eq:db_polytope} is binding
\EndWhile
\end{algorithmic}
\end{algorithm}

\begin{remark}
    If $\Delta^{i,j}_\phi$ does not have any intersections with other decision boundaries, e.g., the case of binary classification, then the equation~\eqref{eq:volume_polytope} becomes an equality directly providing the volume of $\Delta^{i,j}_\phi$ with no need for Algorithm~\ref{alg:db_polytope}.
\end{remark}


Now that we can identify the exact location of decision boundaries and their volume, we can proceed to identify the slices of the domain surrounding the decision boundaries which are guaranteed to have high or low confidence.

\subsection{Regions of guaranteed high confidence in the feature space}

We are interested to identify the regions of the domain where a model is guaranteed to have high confidence, i.e., softmax score. Let us formally define such regions as $$\Phi(\geq\tau) \coloneqq \{x_\phi \: | \: x_\phi \in \Phi, \max softmax(\mathcal{M}(x_\phi)) \geq \tau \}.$$

In Theorems~\ref{theo:bound_delta_low}-\ref{theo:bound_delta_up}, we established that maintaining a certain distance from the decision boundaries will lead to lower bounds on the softmax score of any given model. Later, in Theorem~\ref{theo:volume_polytope}, we established the location of decision boundaries in the feature space and the corresponding volume that they occupy. It follows from these two sets of theorems that in the offsets of certain distance from the decision boundaries, a model is guaranteed to have low confidence, and in the regions away from those offsets, a model is guaranteed to have high confidence. We can formally identify the fraction of these regions using the following starting with the case of binary classification.

\begin{theorem} \label{theo:binary_confident}
    Given a model $\mathcal{M}(.)$ with feature space $\Phi = [-a,a]^n$ and two output classes, the fraction of feature space that is classified with softmax score of at least $\tau$ is lower bounded by
    \begin{equation} \label{eq:binary_volume}
        1 - \frac{\sqrt{2}}{a} \delta \leq 1 - \frac{2 \delta}{(2a)^n}  vol(\Delta_\phi^{i,j}) \leq vol(\Phi(\geq\tau) ),
    \end{equation}
    where the lower bound corresponds to the volume of all the points which maintain a minimum distance of
    \begin{equation} \label{eq:delta_confidence}
        \delta = \frac{1}{\rho(W)} \log\big(\frac{\tau(1-n)}{\tau-1}\big)
    \end{equation}
    from the decision boundaries in $\Phi$.
\end{theorem}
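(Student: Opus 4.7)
The plan is to assemble the bound in three stages: invert the confidence lower bound from the corollary of Theorem~\ref{theo:bound_delta_low} to produce the required separation $\delta$; cover the low-confidence region by a slab of thickness $2\delta$ around the single binary decision-boundary hyperplane and bound its volume; then apply Ball's cube-slicing inequality to get the coarser, purely $a$-dependent bound on the left.

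First I would set the class-agnostic lower bound from the corollary of Theorem~\ref{theo:bound_delta_low} equal to $\tau$ and solve for $\delta$: rewriting $\tau\bigl(e^{\delta\rho(W)}+n-1\bigr)=e^{\delta\rho(W)}$ gives $e^{\delta\rho(W)}=\tau(n-1)/(1-\tau)$, which after multiplying numerator and denominator by $-1$ is exactly the expression $\delta = \rho(W)^{-1}\log\bigl(\tau(1-n)/(\tau-1)\bigr)$ in the statement. By contraposition, every $x_\phi\in\Phi$ with perpendicular distance to the (unique, since $n=2$ classes) decision-boundary hyperplane $H$ strictly greater than $\delta$ lies in $\Phi(\geq\tau)$. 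So the bad region is contained in the slab $B_\delta := \{x_\phi\in\Phi : d(x_\phi,H)\leq\delta\}$, and $\mathrm{vol}(\Phi(\geq\tau))\geq \mathrm{vol}(\Phi)-\mathrm{vol}(B_\delta)$.

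Next I would compute $\mathrm{vol}(B_\delta)$ by Cavalieri: foliate the hypercube by hyperplanes $H_t$ parallel to $H$ at signed perpendicular distance $t$, so that $\mathrm{vol}(B_\delta)=\int_{-\delta}^{\delta}\mathrm{vol}_{n-1}(H_t\cap\Phi)\,dt$. Since $\Delta_\phi^{i,j}=H_0\cap\Phi$ is (under centrality of the decision boundary) the maximum among these parallel slices, each integrand is bounded by $\mathrm{vol}(\Delta_\phi^{i,j})$ and the integral by $2\delta\cdot\mathrm{vol}(\Delta_\phi^{i,j})$. Dividing by the hypercube volume $(2a)^n$ yields the middle inequality $\mathrm{vol}(\Phi(\geq\tau))\geq 1 - \tfrac{2\delta}{(2a)^n}\mathrm{vol}(\Delta_\phi^{i,j})$ (reading $\mathrm{vol}(\Phi(\geq\tau))$ as a fraction, consistent with the statement). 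For the leftmost inequality I would invoke Ball's theorem on hyperplane sections of the cube: every hyperplane section of the unit $n$-cube has $(n-1)$-volume at most $\sqrt{2}$, so rescaling to $[-a,a]^n$ gives $\mathrm{vol}(\Delta_\phi^{i,j})\leq \sqrt{2}(2a)^{n-1}$; plugging into the middle bound collapses $\tfrac{2\delta\cdot\mathrm{vol}(\Delta_\phi^{i,j})}{(2a)^n}$ to $\tfrac{\sqrt{2}\,\delta}{a}$.

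The hard part will be the slice-comparison step $\mathrm{vol}_{n-1}(H_t\cap\Phi)\leq\mathrm{vol}(\Delta_\phi^{i,j})$ for all $|t|\leq\delta$. Brunn's theorem ensures that $t\mapsto \mathrm{vol}_{n-1}(H_t\cap\Phi)^{1/(n-1)}$ is concave, hence the slice volumes are unimodal along $t$, but the maximum need not be attained at $t=0$ unless the decision hyperplane happens to pass through a central axis of the hypercube; for off-center boundaries the bound can fail. To handle this cleanly I would either (i) absorb a centrality hypothesis on the decision boundary into the statement, or (ii) replace $\mathrm{vol}(\Delta_\phi^{i,j})$ in the middle expression with the maximum slice volume $\max_t \mathrm{vol}_{n-1}(H_t\cap\Phi)$. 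The latter is still controlled by Ball's inequality, so the leftmost bound $1-\sqrt{2}\delta/a$ remains valid even without any centrality assumption, and the overall chain of inequalities is preserved.
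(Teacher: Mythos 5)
Your proposal follows essentially the same route as the paper's proof: invert the corollary of Theorem~\ref{theo:bound_delta_low} to obtain equation~\eqref{eq:delta_confidence}, contain the low-confidence region in a slab of half-width $\delta$ around the single decision hyperplane, bound the slab's volume by $2\delta\cdot vol(\Delta_\phi^{i,j})$, and invoke Ball's cube-slicing bound $vol(\Delta_\phi^{i,j}) \leq \sqrt{2}(2a)^{n-1}$ for the leftmost inequality. (One small point in your favor: the paper's proof cites Theorem~\ref{theo:bound_delta_up} for the confidence guarantee, but the relevant result is the lower-bound corollary of Theorem~\ref{theo:bound_delta_low}, which is what you correctly use.)

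Where you go beyond the paper is the slice-comparison step, and you are right to flag it. The paper simply asserts that the volume of the points within distance $\delta$ of the decision boundary is at most $2\delta\cdot vol(\Delta_\phi^{i,j})$; this requires $vol_{n-1}(H_t\cap\Phi)\leq vol_{n-1}(H_0\cap\Phi)$ for all $|t|\leq\delta$, which by Brunn's theorem holds when the decision hyperplane passes through the center of $[-a,a]^n$ but can fail when $b[i]\neq b[j]$ shifts it off-center. The paper's own remark following the theorem concedes that the $2\delta\cdot vol(\Delta_\phi^{i,j})$ product is only "a reasonably close approximation," which is an implicit admission that the middle inequality of equation~\eqref{eq:binary_volume} is not rigorously established as stated. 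Your two proposed repairs, adding a centrality hypothesis or replacing $vol(\Delta_\phi^{i,j})$ by the maximal parallel slice $\max_t vol_{n-1}(H_t\cap\Phi)$, are both sound, and your observation that the leftmost bound $1-\sqrt{2}\delta/a$ survives unconditionally (since Ball's inequality controls every parallel slice, not just the central one) is correct and is the cleanest way to salvage the theorem without additional assumptions.
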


\begin{proof}
    The model has $2$ classes and one decision boundary, $\Delta_\phi^{i,j}$, dividing the $\Phi$ into two sections. The vertices of the polytope defining $\Delta_\phi^{i,j}$ can be identified via Algorithm~\ref{alg:db_polytope}, and its volume can be computed precisely via the right hand side of equation~\eqref{eq:volume_polytope}. 
    From Theorem~\ref{theo:bound_delta_up}, the offset of $\delta$ would guarantee classification with softwax score of at least $\tau$ for any point in the $\Phi$. The volume of the regions that have a minimum distance of $\leq \delta$ from the decision boundaries is less than or equal to the volume of decision boundaries times $2\delta$. Therefore, the remainder of the volume of the $\Phi$ is guaranteed to have softmax score $\geq \tau$.
    
    Moreover, $\Delta_\phi^{i,j}$ is the intersection of a hyperplane and a hypercube and its volume is upper bounded by $\sqrt{2} (2a)^{n-1}$ \cite{ball1986cube}. Replacing this for $vol(\Delta_\phi^{i,j})$ yields the leftmost side of the bound.
\end{proof}

\begin{remark}
    We note that the volume of the offset surrounding the $\Delta_\phi^{i,j}$ can be computed precisely by shifting the hyperplane between classes $i$ and $j$ by $\pm \delta$, and computing the corresponding vertices with the $\Phi$ resulting in a hypersimplex. The volume of the hypersimplex will be the exact value for $vol(\Delta_\phi^{i,j})$. Nevertheless, computing the volume of $\Delta_\phi^{i,j}$ and multiplying it with $2\delta$ may be a reasonably close approximation in the binary classification case.
\end{remark}



    

We now extend this to the multi-classification case. First, let us define a notation for the area surrounding any given decision boundary
\begin{equation}
    \Delta_\phi^{(i,j) \pm \delta} \coloneqq \{x_\phi \: | \: x_\phi \in \Phi, \frac{|(W[i] - W[j]) x_\phi + b[i] - b[j] |}{\| W[i] - W[j] \|_2}   \leq \delta    \},
\end{equation}

\begin{theorem} \label{theo:multi_confident}
    Given a trained model $\mathcal{M}(.)$ and its feature space $\Phi = [-a,a]^n$, the fraction of feature space that is classified with softmax score of at least $\tau$ is lower bounded by
    \begin{equation} \label{eq:total_highconf_ratio}
        \mathcal{R}(\Phi(\geq\tau) ) \geq 1 - \frac{1}{(2a)^n} vol (\Gamma \Delta) ,
    \end{equation}
    where $\Gamma \Delta$ denotes the union of all the decision boundaries and their offsets
    \begin{equation} \label{eq:total_volume_db}
        \Gamma \Delta =  \bigcup_{i=1}^{n-1}  \bigcup_{j=i+1}^n   \Delta_\phi^{(i,j) \pm \delta} ,
    \end{equation}    
    and $\delta$ is the offset from the decision boundaries defined by equation~\eqref{eq:delta_confidence}.
\end{theorem}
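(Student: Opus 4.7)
The plan is to combine the class-agnostic confidence bound from Theorem~\ref{theo:bound_delta_low} (via its corollary using $\rho(W)$) with a geometric containment argument that relates offsets of actual decision boundaries to the full hyperplane bands $\Delta_\phi^{(i,j)\pm\delta}$. First I would observe that equation~\eqref{eq:delta_confidence} is exactly the value of $\delta$ obtained by inverting $\frac{e^{\delta \rho(W)}}{e^{\delta \rho(W)} + (n-1)} = \tau$. Consequently, by Theorem~\ref{theo:bound_delta_low} and its corollary, any $x_\phi \in \Phi$ whose distance to the closest decision boundary is at least $\delta$ is guaranteed to satisfy $\max\, \mathrm{softmax}(\mathcal{M}(x_\phi)) \geq \tau$.

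Next I would take the contrapositive: any $x_\phi$ with $\max\, \mathrm{softmax}(\mathcal{M}(x_\phi)) < \tau$ must lie within distance $\delta$ of some actual decision boundary separating its predicted class $i$ from some other class $j$. By the defining conditions~\eqref{eq:db_con1}-\eqref{eq:db_con2}, this actual decision boundary is a subset $S_{ij}$ of the full separating hyperplane $H_{ij} = \{x : (W[i]-W[j])x + b[i] - b[j] = 0\}$, namely the portion on which classes $i$ and $j$ both dominate all other classes. Because $\inf_{y \in S} d(x,y) \geq \inf_{y \in T} d(x,y)$ whenever $S \subseteq T$, being within $\delta$ of $S_{ij}$ forces $x_\phi$ to also be within $\delta$ of the full hyperplane $H_{ij}$, which is precisely the defining condition $x_\phi \in \Delta_\phi^{(i,j)\pm\delta}$.

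These two steps yield the set-theoretic containment $\{x_\phi \in \Phi : \max\, \mathrm{softmax}(\mathcal{M}(x_\phi)) < \tau\} \subseteq \Gamma\Delta$. Taking complements inside $\Phi$ gives $\Phi(\geq\tau) \supseteq \Phi \setminus \Gamma\Delta$; passing to Lebesgue measure and dividing both sides by $\mathrm{vol}(\Phi) = (2a)^n$ produces equation~\eqref{eq:total_highconf_ratio} immediately, with subadditivity of volume across the union in \eqref{eq:total_volume_db} absorbing any overlap between the individual bands.

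The step I expect to be the main obstacle is the direction of the subset argument: one must be careful that it is the $\delta$-neighborhood of the \emph{clipped} decision boundary $S_{ij}$ (not of the full hyperplane) which contains the low-confidence region, and only then enlarge that neighborhood to the band $\Delta_\phi^{(i,j)\pm\delta}$ around $H_{ij}$ that appears in the theorem. Note also that the conclusion is intentionally one-sided: the pairwise bands can overlap wherever several decision hyperplanes are simultaneously close, so computing $\mathrm{vol}(\Gamma\Delta)$ exactly would require inclusion-exclusion over those intersections with $[-a,a]^n$, and this theorem purposely sidesteps that by bounding the union volume from above through subadditivity rather than evaluating it.
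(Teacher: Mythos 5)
Your proof is correct and follows essentially the same route as the paper: invert equation~\eqref{eq:delta_confidence} to obtain the confidence guarantee at distance $\delta$ via Theorem~\ref{theo:bound_delta_low} and its corollary, show by contraposition that every low-confidence point lies in the union of $\delta$-bands, and pass to complements and volumes over $\Phi=[-a,a]^n$. Your write-up is in fact more careful than the paper's one-paragraph argument: you correctly flag that $\Delta_\phi^{(i,j)\pm\delta}$ is a band around the \emph{full} hyperplane rather than the clipped decision boundary (an enlargement that only helps the one-sided containment), and you invoke the lower-bound theorem where the paper's proof text mistakenly cites Theorem~\ref{theo:bound_delta_up}; the only quibble is that no subadditivity is actually needed since the bound uses the volume of the union $\Gamma\Delta$ directly.
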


\begin{proof}
    From Theorem~\ref{theo:bound_delta_up}, the offset of $\delta$ from the decision boundaries guarantees classification with softwax score of at least $\tau$ for any point in the $\Phi$. A given model with $n$ output classes has $\binom{n}{2}$ distinct decision boundaries. For each decision boundary, the region that maintains a distance of $\leq \delta$ from it is defined by $\Delta_\phi^{(i,j) \pm \delta}$ and its computable via an extension of Algorithm~\ref{alg:db_polytope}. The union of these regions defines all the points in $\Phi$ that have a distance of $\leq \delta$ from at least one decision boundary. The complement of the volume of this union will be the lower bound on the desired fraction of feature space.
\end{proof}

\subsection{Regions of guaranteed overconfidence: High confidence for the unknowns}


In a separate study, we empirically establish that the scope of a model is tightly related to the convex hull of its training set in the feature space. 
For the purpose of the current study, we take it for granted that the scope of a trained model is tightly related to the convex hull of its training set
\begin{equation}
    x \in \Phi : \|x-\mathcal{H}^{tr} \|_2 \leq \delta,
\end{equation}
for some $\delta$ that is relatively small compared to the dimensions of $\mathcal{H}^{tr}$ and $\Phi$.

In section~\ref{sec: unknowns_large}, we discussed, based on empirical evidence, that the convex hull of training set may occupy a small fraction of the feature space and vast areas in the feature space may be unrelated to the scope of a given model. Now, we integrate these notions with the confidence of the models to identify the regions in the feature space that are unrelated to the scope, yet, the model is guaranteed to have high confidence.

To identify these regions, we incorporate the convex hull of training set into our previous theorems. In Theorems~\ref{theo:binary_confident}-\ref{theo:multi_confident}, we devised a method to identify the regions of the feature space that are guaranteed to be classified with high confidence. Now, we aim to identify the high confidence regions that maintain a certain distance from the convex hull of training set. Such regions would be the improper confidence of the trained model: overconfidence for the inputs irrelevant to the scope of the model.

We first approximate the convex hull of training set with the smallest hyperrectangle that contains it.

\begin{definition}
    Given the feature space $\Phi = [-a,a]^n$, and the convex hull $\mathcal{H}^{tr}_\phi \in \Phi$: we define  $\Phi^{\mathcal{H}}$ as the smallest hyperrectangle, with the same orientation as $\Phi$, that contains the $\mathcal{H}^{tr}$:
    $$\Phi^{\mathcal{H}} \coloneqq [l_i,u_i]_{i=1}^{f} \: | \: l_i = \min (x^j_\phi[i], \forall x_\phi^j \in \mathcal{D}^{tr}_\phi) , u_i = \max_j (x^j_\phi[i], \forall x_\phi^j \in \mathcal{D}^{tr}_\phi) $$
    Moreover, $\Phi^{\mathcal{H}+\delta^h}$ denotes the hyperrectangle with an outer offset of $\delta^h$ from $\Phi^{\mathcal{H}}$:
    $$\Phi^{\mathcal{H}+\delta^h} \coloneqq [l_i',u_i']_{i=1}^{f} \: | \: l_i' = \max (l_i - \delta^h, -a) , u_i = \min (u_i +\delta^h , a). $$
    It follows that any point outside the $\Phi^{\mathcal{H}+\delta^h}$ would have a distance of at least $\delta^h$ from the $\mathcal{H}^{tr}_\phi$. We consider the small offset of $\delta^h$ around the $\mathcal{H}^{tr}_\phi$ to still be relevant to the scope of the trained model. The value of $\delta^h$ shall be chosen empirically by verifying how far from the $\mathcal{H}^{tr}_\phi$ inputs can still be relevant to the scope of the model.
\end{definition}

We are now set to formalize the regions of the domain that maintain a distance of $\delta^h$ from $\mathcal{H}^{tr}_\phi$ (being unknown to the model), and yet, the model is guaranteed to have high confidence about any inputs in those regions. In the following, $\mathcal{R}^{cu}$ denotes the ratio of the volume of the domain that has such characteristics.

\begin{theorem} \label{theo:multi_confident_unknown}
    Given a model $\mathcal{M}(.)$ with feature space $\Phi = [-a,a]^n$ and two output classes, the fraction of feature space that maintains a minimum distance of $\delta^{h}$ from the $\mathcal{H}^{tr}_\phi$ and is classified with softmax score of at least $\tau$ is lower bounded by
    \begin{equation} \label{eq:multi_unknown_confident}
        \mathcal{R}^{cu} \ge 1 - \frac{1}{(2a)^n} \big( vol(\Phi^{\mathcal{H}} ) + vol(\Gamma \Delta ) - vol( \Phi^{\mathcal{H}} \cap \Gamma \Delta ) \big),
    \end{equation}
    where $\delta$ is defined in equation~\eqref{eq:delta_confidence} and $\delta^h$ is a small positive number defining the offset from the $\mathcal{H}^{tr}_\phi$.
\end{theorem}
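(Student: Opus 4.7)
The plan is to extend Theorem~\ref{theo:multi_confident} by intersecting the high-confidence region with an additional geometric constraint that excludes neighborhoods of $\mathcal{H}^{tr}_\phi$. First I would recast the target set via complementary bad events inside the cube $\Phi = [-a,a]^n$: let $A := \{x_\phi \in \Phi : \|x_\phi - \mathcal{H}^{tr}_\phi\|_2 < \delta^h\}$ be the set of points too close to the convex hull, and $B := \{x_\phi \in \Phi : \max softmax(\mathcal{M}(x_\phi)) < \tau\}$ be the set of low-confidence points. Then the region whose volume ratio defines $\mathcal{R}^{cu}$ is exactly $\Phi \setminus (A \cup B)$, so $\mathcal{R}^{cu} = 1 - vol(A \cup B)/(2a)^n$ and it suffices to upper bound $vol(A \cup B)$.

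Next I would replace $A$ and $B$ by easy-to-volumize supersets coming from the earlier definitions and theorems. Since $\mathcal{H}^{tr}_\phi \subseteq \Phi^{\mathcal{H}}$ by the construction of the enclosing hyperrectangle, every point within distance $\delta^h$ of $\mathcal{H}^{tr}_\phi$ lies in the $\delta^h$-enlargement of $\Phi^{\mathcal{H}}$, giving $A \subseteq \Phi^{\mathcal{H}+\delta^h}$. By the contrapositive of Theorem~\ref{theo:bound_delta_low}, with the margin $\delta$ chosen via equation~\eqref{eq:delta_confidence}, any point outside $\Gamma\Delta$ is classified with top softmax $\geq \tau$, hence $B \subseteq \Gamma\Delta$. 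Consequently $A \cup B \subseteq \Phi^{\mathcal{H}+\delta^h} \cup \Gamma\Delta$, and by monotonicity of volume together with the standard inclusion-exclusion identity,
\begin{equation*}
vol(A \cup B) \leq vol(\Phi^{\mathcal{H}+\delta^h}) + vol(\Gamma\Delta) - vol(\Phi^{\mathcal{H}+\delta^h} \cap \Gamma\Delta).
\end{equation*}
Substituting this into $\mathcal{R}^{cu} = 1 - vol(A \cup B)/(2a)^n$ yields the stated bound (with $\Phi^{\mathcal{H}}$ replaced by its $\delta^h$-enlargement, which matches the geometric content of the theorem; the bare $\Phi^{\mathcal{H}}$ in the statement appears to be a minor notational slip that does not affect the structure of the argument).

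The individual volumes on the right-hand side are accessible from earlier machinery: $vol(\Phi^{\mathcal{H}+\delta^h})$ is simply the product of the enlarged side lengths $u'_i - l'_i$, and $vol(\Gamma\Delta)$ is obtained by assembling the $\binom{n}{2}$ pairwise slabs $\Delta^{(i,j)\pm\delta}_\phi$ via the vertex-enumeration strategy of Algorithm~\ref{alg:db_polytope} applied with the hyperplane shifted by $\pm\delta$, exactly as used in the proof of Theorem~\ref{theo:multi_confident}.

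The main obstacle is the intersection term $vol(\Phi^{\mathcal{H}+\delta^h} \cap \Gamma\Delta)$. Geometrically this is a union, over class pairs $(i,j)$, of a slab intersected with an axis-aligned hyperrectangle, so in principle it can be computed by the same simplex-traversal as in Algorithm~\ref{alg:db_polytope} after replacing the cube constraints by the tighter box $\Phi^{\mathcal{H}+\delta^h}$, followed by another round of inclusion-exclusion over the $\binom{n}{2}$ pairs. This inclusion-exclusion tree grows combinatorially with $n$, so for a clean, uniformly valid lower bound one can instead drop the intersection term (it is non-negative, which only weakens the bound), or upper-bound it by $\min\{vol(\Phi^{\mathcal{H}+\delta^h}),\, vol(\Gamma\Delta)\}$; the full tighter form stated in the theorem is the limit obtained when one can afford the exact pairwise computation.
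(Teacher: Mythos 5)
Your proposal is correct and follows essentially the same route as the paper: exclude the low-confidence slab union $\Gamma\Delta$ and the hyperrectangle enclosing the convex hull, then apply inclusion-exclusion so their overlap is not subtracted twice. Your observation that the excluded neighborhood of $\mathcal{H}^{tr}_\phi$ should be the enlarged box $\Phi^{\mathcal{H}+\delta^h}$ rather than the bare $\Phi^{\mathcal{H}}$ is a genuine (if minor) correction: the paper's own proof silently uses $\Phi^{\mathcal{H}}$, which fails to exclude points in the $\delta^h$-shell lying outside the hyperrectangle and therefore does not strictly yield a valid lower bound as stated, so your version is the more careful one.
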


\begin{proof}
    We know from Theorem~\ref{theo:bound_delta_low} that maintaining distance of $\delta$ from the decision boundaries guarantees classification with softmax score of at least $\tau$ anywhere in $\Phi$. $\Gamma \Delta$, calculated via equation~\eqref{eq:total_volume_db}, is the union of all regions that have a distance of $\le \delta$ from the decision boundaries in $\Phi$, i.e., regions that are classified with low confidence. All of these low confidence regions shall be excluded from the $\mathcal{R}^{cu}$ as they have softmax score $\le \tau$. The entire $\Phi^{\mathcal{H}}$ is also excluded from $\mathcal{R}^{cu}$ as they are close to the $\Phi^{\mathcal{H}}$ and considered known to the model. However, these two have an overlap which should not be deducted from the domain twice. $\Phi^{\mathcal{H}} \cap \Gamma \Delta$ is the portion of $\Gamma \Delta$ that intersects with $\Phi^{\mathcal{H}}$ which we add back to $\mathcal{R}^{cu}$. The remainder of the $\Phi$ is unknown to the model, yet, the model is guaranteed to have high confidence.
\end{proof}

Previously, we showed how each of these components can be computed. Later, we will use these formulations to evaluate the percentage of domains that are unknown to the models yet the models, unjustifiably have high confidence about them. We see that these regions are considerably large.











\section{Unknowns of the trained models in relation to common failure modes}

Here, we consider four specific modes of failure, two of which fall under the umbrella of confusion.

\subsection{Confusion between classes}

By confusion, we refer to cases where a model may be undecided about which class applies to a given input. One such confusion would occur when a point is on or close to a decision boundary. Another, more complicated case will arise when we encounter overlapping classes. That is, when we have two or more classes where their training set overlap each other. The overlap region in particular would be a source of confusion. Such overlaps appear less often in computer vision tasks, but they are quite common in social applications of AI, e.g., the Adult Income dataset \cite{Dua2017uci}. To quantify the confusion for overlapping classes, we consider the distance to the convex hull of classes.

\subsubsection{Separable classes}

The first type of confusion is when training samples of each class are separable from other classes in the feature space. In such cases, closeness to the decision boundary that has separated the classes would be an indication for a model's confusion. For this, we measure the distance to the closest decision boundary
\begin{equation} \label{eq:amb_min_db}
    d^{f,min}_\psi (x_\psi) = \min_{j \in \{1:n \backslash i \}} d^{f(i,j)}_\psi (x_\psi)
\end{equation}
This measure is inversely related to ambiguity of an input.

\subsubsection{Overlapping classes}

In some cases, we encounter datasets where classes have some overlap. Such overlap could be in the original domain and/or in the feature space. To reflect this in our ambiguity measure, we consider the difference between the distance to the two closest convex hulls. In other words, we first sort the vector $d^{\mathcal{H}}_\psi (x)$, then subtract the first element of the sorted vector from its second element
\begin{equation} \label{eq:amb_dif_hull}
    d^{c}_\psi (x_\psi) = d^{\mathcal{H},sorted}_\psi (x) [1] - d^{\mathcal{H},sorted}_\psi (x)[0]
\end{equation}

This measure is inversely related to the ambiguity. Because the vector, $d^{\mathcal{H},sorted}_\psi (x)$, is sorted, its first element is not larger than its second element, and hence, we have $d^{c}_\psi (x) \geq 0$ for any input.

When there is an overlap between two classes of training set, and a sample falls into that overlap, its distance from the convex hull of each class will be zero, and the resulting $d^{c}_\psi (x)$ will be zero as well, leading to $\zeta(x) \rightarrow \infty$.

\begin{figure}[h]
     \centering
     \begin{subfigure}[b]{0.48\textwidth}
         \centering
         \includegraphics[width=1\linewidth]{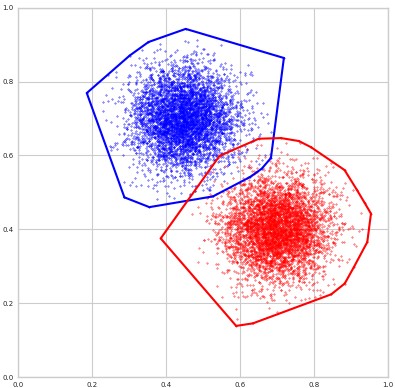}
         \caption{}
         \label{fig:y equals x}
     \end{subfigure}
     \hfill
     \begin{subfigure}[b]{0.48\textwidth}
         \centering
         \includegraphics[width=1\linewidth]{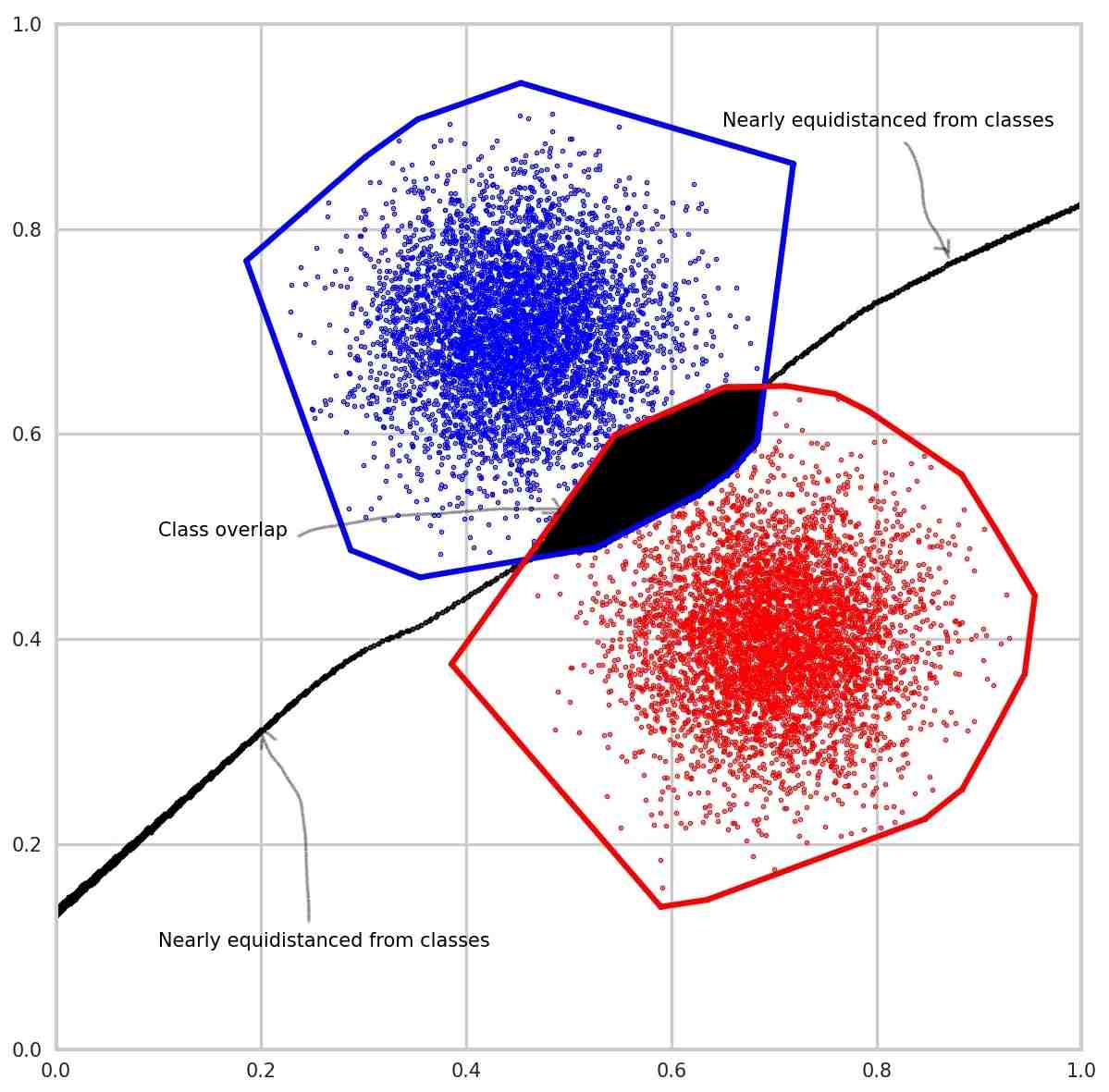}
         \caption{}
         \label{fig:three sin x}
     \end{subfigure}
        \caption{\textbf{(a)} Example of a 2D domain with overlapping classes and \textbf{(b)} the regions where $d^{c}_\psi (x_\psi)$ is equal or close to zero leading to high ambiguity. In the overlap region, we have $d^{c}_\psi (x_\psi)=0$ causing the ambiguity measure to go to infinity. For points in the domain that are equidistanced from the convex hull of these two classes, we again have $d^{c}_\psi (x_\psi) = 0$. For all other points in the domain, $d^{c}_\psi (x_\psi)$ is strictly positive. $d^{c}_\psi (x_\psi)=0$ would be larger for points that are close to one class and not as close to the other class, implying little or no confusion.}
        \label{fig:2d_overlap}
\end{figure}

One could consider limiting this measure to the samples that are inside the overlapping regions, i.e., disregard it whenever a sample does not coincide with an overlapping region. However, we find this measure to be insightful even when a sample is outside the convex hull of classes. Outside the convex hull of classes, this measure will go to zero when a sample is equidistant from the two closest classes as shown in Figure~\ref{fig:2d_overlap}. This, again, may imply confusion between the two classes. In our ablation studies, we will demonstrate how insightful this measure can be. Moreover, we show that instead of computing the difference between the two distances in equation~\eqref{eq:amb_dif_hull}, one can use a ratio of distances.

\subsection{Excessive extrapolation}

This case occurs when a model processes an input that excessively outside the convex hull of training set. We measure this via the minimum distance to the convex hull of classes
\begin{equation} \label{eq:amb_min_hull}
    d^{\mathcal{H},min}_\psi (x_\psi) = \min_{j \in \{1:n \}} d^{\mathcal{H},j}_\psi (x_\psi).
\end{equation}
This measure is directly related to ambiguity.

\subsection{Gaps in the model's knowledge}

Geometrically, this type of failure refers to the possible holes that may exist within the training samples. For example, consider a model that makes clinical decisions for patients. The training set of such model may only consist of patients ages 5-10 and patients aged 80-90 years old. When this model is deployed to make a decision for a patient that is 40 years old, it might be interpolating between its training samples, yet, in the dimension of age, the 40 years old testing sample may be too far from any of the training samples. We characterize this notion of knowledge gap by identifying the ball with the largest radius centered at the sample $x_\psi$ not containing any of training samples. The radius of such ball is our proxy for the knowledge gap:
\begin{equation} \label{eq:amb_gap_hull}
    d_\psi^g (x_\psi) = \max_\rho \rho: \forall y_\psi \in \mathcal{D}^{tr}_\psi: \| x_\psi - y_\psi \|_2 \geq \rho  
\end{equation}
This measure is directly related to ambiguity.






\subsection{Our ambiguity measure}

We are now ready to incorporate all of the above geometric information into a unified ambiguity measure.

\begin{equation} \label{eq:amb_comp}
    \zeta (x) =  \Big(\frac{(d^{\mathcal{H},min}_\psi(x) + \epsilon) (d_\psi^g (x) + \epsilon)}{d^{f,min}_\psi(x) d^{c}_\psi (x)}\Big)^\alpha,
\end{equation}
where $\epsilon$ and $\alpha$ are positive scalars to normalize the ambiguity measure. The purpose of $\epsilon$ is to extend the ambiguity measure inside the convex hull of training set. When we encounter a sample that falls inside the \hull (while not coinciding with any of the training samples), its corresponding $d^{\mathcal{H},min}_\psi$ will be zero, yet the other three measures can reflect useful information in $\zeta$. Our recommendation is to choose $\epsilon \ll 1$. For the $\alpha$, it can be chosen such that ambiguity is bounded between 0 and 1 for a set of samples.

\begin{remark}
The order of $\zeta(.)$ for a set of distinct inputs is preserved for any choice of $\epsilon > 0$ and $\alpha > 0$.
\end{remark}

The measure provided by equation~\eqref{eq:amb_comp} is well defined even in its extremes. We know that all distances are lower bounded by zero, therefore, the value of $\zeta(.)$ for any input in the domain is non-negative. When $d^{g}_\psi \rightarrow 0$, it follows that $d^{\mathcal{H},min}_\psi \rightarrow 0$, and as a result, the ambiguity measure goes to $\epsilon^2$ indicating that the input is not ambiguous. As the distance to the convex hull increases, the ambiguity increases linearly. By contrast, when $d^{f,min}_\psi \rightarrow 0$ and/or $d^{c}_\psi \rightarrow 0$, the ambiguity goes to infinity, i.e., the input is highly ambiguous. We almost never deal with these extreme cases, and almost always, these distances are positive values that are not too small or too large.

{\bf Models trained with special training methods:}
An underlying assumption here is that training set contains samples with hard labels. There are training methods, such as Mixup \cite{zhang2018mixup} that trains using soft labels. Such methods define the location of decision boundaries in between the samples. This does not contradict our approach because in such instances, we still have the original training samples with their hard labels, and moreover, the interpolated samples with soft labels are not visually meaningful, i.e., they only define the location of decision boundaries.

{\bf Models trained with unlabeled data:}
In self-supervised learning and in contrastive learning, it is possible to train a model on training samples without labels. Such methods are successful because the number of classes are large and the probability of encountering false positive samples is very small, i.e., any two random pair of samples that one picks from an unlabeled training set are unlikely to be from a same class. Such models, trained with contrastive learning, need to be fine-tuned eventually with labeled training samples in order to define the location of decision boundaries. This last stage will provide the frame of reference we need to compute our ambiguity measure. In other words, when a model is fine-tuned and the location of its decision boundaries are defined, we can go back to the unlabeled training set and label the samples using the fine-tuned model. Once we obtain the labeled training set, we can proceed with computing our ambiguity measure for any testing sample.

\subsection{A simplified version of ambiguity measure: rule of thumb}

As a rule of thumb alternative to the measure provided in equation~\eqref{eq:amb_comp}, one can consider using the following simplified measure
\begin{equation} \label{eq:thumb_amb}
    \bar{\zeta} (x) = \Big(\frac{d^{\mathcal{H},min}_\psi(x) + \epsilon}{d^{f,min}_\psi(x)}\Big)^\alpha.
\end{equation}
We show in our experiments that this rule of thumb works quite effectively in performing a wide range of tasks regarding the model failures albeit not as good as the equation~\ref{eq:amb_comp}.

We have considered other rules of thumb as well, e.g., subtracting the two distances of $d^{\mathcal{H},min}_\psi(x)$ and $d^{f,min}_\psi(x)$ instead of dividing them. Overall, we did not see a significant difference in predictive power of the measure in our empirical experiments. We chose the ratio instead of the difference, because it appeared to be more intuitive and slightly better in its predictive power.



\subsection{Ambiguity explanation} \label{sec:explanation}

We can generate an automatic text explaining why an input is ambiguous or not. Generally, this text will describe each component of the ambiguity measure in a percentile form in comparison to other samples.

The structure of the text is as following:

    \textit{"Classification of this input is \rule{1cm}{0.15mm}. This image is \underline{ambiguous/unambiguous} because its ambiguity measure is \underline{higher/lower} than \rule{.5cm}{0.15mm} \% of all samples. The ambiguity of this input has four components:}
    \begin{itemize}
    \itemsep0em
        \item Its distance to the convex hull of training set is > \rule{.5cm}{0.15mm} percentile
        \item Its distance to closest decision boundary > \rule{.5cm}{0.15mm} percentile
        \item It is close to both classes \rule{.5cm}{0.15mm} and \rule{.5cm}{0.15mm}
        \item The radius of gap around this sample > \rule{.5cm}{0.15mm} percentile
    \end{itemize}









\subsection{Abstention model}

At the deployment time, for any given input, the model first computes the ambiguity measure. If the ambiguity measure is larger than the chosen threshold, the model abstains from classifying. Otherwise, it will return the classification of input along with the ambiguity measure.

\begin{algorithm}[H]
\caption{Unknown aware inference}\label{alg:amb_inference}
\begin{algorithmic}[1]
\Require model $\mathcal{M}(.)$, input $x$, ambiguity function $\zeta(.)$, ambiguity threshold $\tau$
\Ensure Classification of $x$
\State Calculate $x_\psi$ using equations~\eqref{eq:map_phi} and~\eqref{eq:map_psi}
\State Compute $\zeta(x)$ using equation~\eqref{eq:amb_comp}
\If{$\zeta(x) > \tau$}
    \State \textit{return} unsure: abstaining from classification
    \State \textit{return} ambiguity explanation (Section~\ref{sec:explanation})
\Else
    \State \textit{return} classification: $\mathcal{M}(x)$, ambiguity measure: $\zeta(x)$
\EndIf
\end{algorithmic}
\end{algorithm}

The threshold $\tau$ can be chosen based on the statistics of ambiguity measure computed for a set of samples perhaps from a validation set. For example, one can compute the ambiguity of inputs on a validation set, and then set the $\tau$ that best separates the misclassifications of that validation set from its correct classifications. Alternatively, for any model that is already used in practice for a while, there will be some samples of its failure modes available at hand. One can compute the ambiguity of those failures and set the $\tau$ that best identifies those failures. In choosing the $\tau$, limiting the rate of false positives may also be a consideration.

If we do not have access to a validation set or any samples of previous failures, we can compute the ambiguity measure for the training samples, and then choose some threshold based on the ambiguity distribution. For example, we may set the threshold as the 99th percentile of the ambiguity of training samples.

\subsection{Ambiguity model}

To build an ambiguity model, we may train a classifier on the geometric information of each point in the feature space. The training objective could be to detect undesirable inputs that the model may be exposed to, or the cases where the model is likely to make a mistake. In other words, such ambiguity model will aim to detect and weed out the inputs that the model may not be fit to process them.

Algorithm~\ref{alg:amb_model} presents a formal procedure to train such a model. This procedure relies on having access to a set of normal in-distribution samples $\mathcal{D}^{p}$, and a set of undesirable samples that the model is likely to fail on them $\mathcal{D}^{n}$. The set of undesirable samples in $\mathcal{D}^{n}$ could consist of any undesirable samples that we might have available. For example, if $\mathcal{M}$ has been deployed in practice, it might have made some mistakes already which we can include in $\mathcal{D}^{n}$. Or if we have encountered some OOD or adversarial inputs, we can include them. If we are concerned about specific types of OOD inputs or certain types of adversarial attacks, such samples could be gathered/generated and then included in $\mathcal{D}^{n}$ as well.

As in the case of abstention models, we rely on the geometric information available in the feature space. For any given input, we can compute the geometric information using equations~\eqref{eq:amb_min_db}-\eqref{eq:amb_gap_hull}. Assembling these measurements together as a unified vector for each input in $\mathcal{D}^{p}$ and $\mathcal{D}^{n}$ leads to two new datasets: $\mathcal{G}^{p}$ and $\mathcal{G}^{n}$ which only reflect the geometric information for each of the samples.

About the $\mathcal{A}(.)$, one can use any appropriate classification model. Our task, here, is binary classification and we do not expect $\mathcal{G}_\psi^{n}$ to have high dimensions. Clearly, $\mathcal{G}_\psi^{p}$ and $\mathcal{G}_\psi^{n}$ can be split into a training/validation/testing portions when we train the $\mathcal{A}(.)$. We have experimented with models such as XGBoost, Random Forests, and SVMs, most of which perform reasonably well. 
We train ambiguity models to detect a variety of failure modes such as detecting corner cases, misclassifications, adversarial inputs, and OOD.

\begin{algorithm}[H]
\caption{Creating an ambiguity model to detect undesirable (ambiguous/adversarial/OOD) inputs}\label{alg:amb_model}
\begin{algorithmic}[1]
\Require model $\mathcal{M}$, set of normal samples $\mathcal{D}^{p}$, set of undesirable samples $\mathcal{D}^{n}$
\Ensure ambiguity detection model $\mathcal{A}$
\State $\mathcal{D}^{p}_\psi, \mathcal{D}^{n}_\psi \leftarrow$ map all samples of $\mathcal{D}^{p}$ and $\mathcal{D}^{n}$ to feature space $\Psi$
\State $\mathcal{G}_\psi^{p} \leftarrow $ compute $d^{\mathcal{H},min}_\psi(.)$, $d_\psi^g (.)$, $d^{f,min}_\psi(.)$, and $d^{c}_\psi (.)$ for $\mathcal{D}^{p}_\psi$
\State $\mathcal{G}_\psi^{n} \leftarrow $ compute $d^{\mathcal{H},min}_\psi(.)$, $d_\psi^g (.)$, $d^{f,min}_\psi(.)$, and $d^{c}_\psi (.)$ for $\mathcal{D}^{n}_\psi$
\State train a model $\mathcal{A}(.)$ to classify $\mathcal{G}_\psi^{p}$ from $\mathcal{G}_\psi^{n}$
\State \textit{return} $\mathcal{A}(.)$
\end{algorithmic}
\end{algorithm}

We note that in the literature, there are models specifically designed for adversarial detection that are trained merely on the inputs in the feature space \cite{moayeri2021sample}. That would be equivalent to omitting lines 2-3 of the algorithm above, training the detection model not on $\mathcal{G}_\psi^{p}$ and $\mathcal{G}_\psi^{n}$, rather training the model on $\mathcal{D}_\psi^{p}$ and $\mathcal{D}_\psi^{n}$. We will see in our experimental results that computing the geometric arrangements enables the model to identify a variety of undesirable inputs (not just adversarial) better than merely from the projections to the feature space.

\section{Results}

In this section, we present our results on a ResNet model pre-trained on the CIFAR-10 dataset and a Swin Transformer pretrained on Imagenet. In the appendix, we provide results obtained from other pretrained models as well.

To be added. Stay tuned.

\clearpage

\small
\bibliographystyle{plain}
\bibliography{main}

\begin{thebibliography}{10}

\bibitem{adler1967modern}
Claire~Fisher Adler.
\newblock {\em Modern Geometry}.
\newblock McGraw-Hill, 1967.

\bibitem{ball1986cube}
Keith Ball.
\newblock Cube slicing in ${R}^n$.
\newblock {\em Proceedings of the American Mathematical Society},
  97(3):465--473, 1986.

\bibitem{bengio2013representation}
Yoshua Bengio, Aaron Courville, and Pascal Vincent.
\newblock Representation learning: A review and new perspectives.
\newblock {\em IEEE Transactions on Pattern Analysis and Machine Intelligence},
  35(8), 2013.

\bibitem{blum2019sparse}
Avrim Blum, Sariel Har-Peled, and Benjamin Raichel.
\newblock Sparse approximation via generating point sets.
\newblock {\em ACM Transactions on Algorithms}, 15(3):1--16, 2019.

\bibitem{bueler2000exact}
Benno B{\"u}eler, Andreas Enge, and Komei Fukuda.
\newblock Exact volume computation for polytopes: A practical study.
\newblock In {\em Polytopes—Combinatorics and Computation}, pages 131--154.
  Springer, 2000.

\bibitem{cao2023transparency}
Xuenan Cao and Roozbeh Yousefzadeh.
\newblock Extrapolation and {AI} transparency: Why machine learning models
  should reveal when they make decisions beyond their training.
\newblock {\em Big Data \& Society}, 10(1), 2023.

\bibitem{chen2019learning}
Patrick Chen, Si~Si, Sanjiv Kumar, Yang Li, and Cho-Jui Hsieh.
\newblock Learning to screen for fast softmax inference on large vocabulary
  neural networks.
\newblock In {\em International Conference on Learning Representations}, 2019.

\bibitem{cohen2020separability}
Uri Cohen, SueYeon Chung, Daniel~D Lee, and Haim Sompolinsky.
\newblock Separability and geometry of object manifolds in deep neural
  networks.
\newblock {\em Nature Communications}, 11(1):1--13, 2020.

\bibitem{deng2009imagenet}
Jia Deng, Wei Dong, Richard Socher, Li-Jia Li, Kai Li, and Li~Fei-Fei.
\newblock Imagenet: A large-scale hierarchical image database.
\newblock In {\em IEEE Conference on Computer Vision and Pattern Recognition},
  pages 248--255, 2009.

\bibitem{Dua2017uci}
Dua Dheeru and Efi Karra~Taniskidou.
\newblock {UCI} machine learning repository, 2017.

\bibitem{goldblum2023dataset}
Micah Goldblum, Dimitris Tsipras, Chulin Xie, Xinyun Chen, Avi Schwarzschild,
  Dawn Song, Aleksander Mądry, Bo~Li, and Tom Goldstein.
\newblock Dataset security for machine learning: Data poisoning, backdoor
  attacks, and defenses.
\newblock {\em IEEE Transactions on Pattern Analysis and Machine Intelligence},
  45(2), 2023.

\bibitem{golub2012matrix}
Gene~H Golub and Charles~F Van~Loan.
\newblock {\em Matrix Computations}.
\newblock JHU Press, Baltimore, 4th edition, 2012.

\bibitem{har2016space}
Sariel Har-Peled, Nirman Kumar, David~M Mount, and Benjamin Raichel.
\newblock Space exploration via proximity search.
\newblock {\em Discrete \& Computational Geometry}, 56:357--376, 2016.

\bibitem{hendrycks2017baseline}
Dan Hendrycks and Kevin Gimpel.
\newblock A baseline for detecting misclassified and out-of-distribution
  examples in neural networks.
\newblock In {\em International Conference on Learning Representations}, 2017.

\bibitem{hendrycks2016baseline}
Dan Hendrycks and Kevin Gimpel.
\newblock A baseline for detecting misclassified and out-of-distribution
  examples in neural networks.
\newblock In {\em International Conference on Learning Representations}, 2017.

\bibitem{jaeger2023call}
Paul~F Jaeger, Carsten~Tim L{\"u}th, Lukas Klein, and Till~J. Bungert.
\newblock A call to reflect on evaluation practices for failure detection in
  image classification.
\newblock In {\em International Conference on Learning Representations}, 2023.

\bibitem{jain2023distilling}
Saachi Jain, Hannah Lawrence, Ankur Moitra, and Aleksander Madry.
\newblock Distilling model failures as directions in latent space.
\newblock In {\em International Conference on Learning Representations}, 2023.

\bibitem{lawrence1991polytope}
Jim Lawrence.
\newblock Polytope volume computation.
\newblock {\em Mathematics of Computation}, 57(195):259--271, 1991.

\bibitem{lovasz2006simulated}
L{\'a}szl{\'o} Lov{\'a}sz and Santosh Vempala.
\newblock Simulated annealing in convex bodies and an ${O}^*(n^4)$ volume
  algorithm.
\newblock {\em Journal of Computer and System Sciences}, 72(2):392--417, 2006.

\bibitem{marichal2008slices}
Jean-Luc Marichal and Michael~J Mossinghoff.
\newblock Slices, slabs, and sections of the unit hypercube.
\newblock {\em Online Journal of Analytic Combinatorics}, 3:1--11, 2008.

\bibitem{moayeri2021sample}
Mazda Moayeri and Soheil Feizi.
\newblock Sample efficient detection and classification of adversarial attacks
  via self-supervised embeddings.
\newblock In {\em Proceedings of the IEEE/CVF International Conference on
  Computer Vision}, pages 7677--7686, 2021.

\bibitem{nalisnick2018do}
Eric Nalisnick, Akihiro Matsukawa, Yee~Whye Teh, Dilan Gorur, and Balaji
  Lakshminarayanan.
\newblock Do deep generative models know what they don't know?
\newblock In {\em International Conference on Learning Representations}, 2019.

\bibitem{Nixon_2019_CVPR_Workshops}
Jeremy Nixon, Michael~W. Dusenberry, Linchuan Zhang, Ghassen Jerfel, and Dustin
  Tran.
\newblock Measuring calibration in deep learning.
\newblock In {\em Proceedings of the IEEE/CVF Conference on Computer Vision and
  Pattern Recognition Workshops}, 2019.

\bibitem{nocedal2006numerical}
Jorge {Nocedal} and Stephen {Wright}.
\newblock {\em Numerical Optimization}.
\newblock Springer, New York, 2nd edition, 2006.

\bibitem{o1971hyperplane}
Patrick~E O'Neil.
\newblock Hyperplane cuts of an n-cube.
\newblock {\em Discrete Mathematics}, 1(2):193--195, 1971.

\bibitem{ovadia2019trust}
Yaniv Ovadia, Emily Fertig, Jie Ren, Zachary Nado, D.~Sculley, Sebastian
  Nowozin, Joshua Dillon, Balaji Lakshminarayanan, and Jasper Snoek.
\newblock Can you trust your model's uncertainty? {Evaluating} predictive
  uncertainty under dataset shift.
\newblock In {\em Advances in Neural Information Processing Systems}, 2019.

\bibitem{ren2019likelihood}
Jie Ren, Peter~J Liu, Emily Fertig, Jasper Snoek, Ryan Poplin, Mark Depristo,
  Joshua Dillon, and Balaji Lakshminarayanan.
\newblock Likelihood ratios for out-of-distribution detection.
\newblock In {\em Advances in Neural Information Processing Systems}, pages
  14680--14691, 2019.

\bibitem{shim2017svd}
Kyuhong Shim, Minjae Lee, Iksoo Choi, Yoonho Boo, and Wonyong Sung.
\newblock {SVD}-softmax: Fast softmax approximation on large vocabulary neural
  networks.
\newblock {\em Advances in Neural Information Processing Systems}, 2017.

\bibitem{simonovits2003compute}
Mikl{\'o}s Simonovits.
\newblock How to compute the volume in high dimension?
\newblock {\em Mathematical Programming}, 97:337--374, 2003.

\bibitem{szegedy2014intriguing}
Christian Szegedy, Wojciech Zaremba, Ilya Sutskever, Joan Bruna, Dumitru Erhan,
  Ian Goodfellow, and Rob Fergus.
\newblock Intriguing properties of neural networks.
\newblock In {\em International Conference on Learning Representations}, 2014.

\bibitem{ye2022ood}
Nanyang Ye, Kaican Li, Haoyue Bai, Runpeng Yu, Lanqing Hong, Fengwei Zhou,
  Zhenguo Li, and Jun Zhu.
\newblock {OoD-Bench}: Quantifying and understanding two dimensions of
  out-of-distribution generalization.
\newblock In {\em IEEE/CVF Conference on Computer Vision and Pattern
  Recognition}, 2022.

\bibitem{yousefzadeh2021hull}
Roozbeh Yousefzadeh.
\newblock Deep learning generalization and the convex hull of training sets.
\newblock {\em arXiv preprint arXiv:2101.09849}, 2020.

\bibitem{yousefzadeh2022feature}
Roozbeh Yousefzadeh.
\newblock Decision boundaries and convex hulls in the feature space that deep
  learning functions learn from images.
\newblock {\em arXiv preprint arXiv:2202.04052}, 2022.

\bibitem{zhang2018mixup}
Hongyi Zhang, Moustapha Cisse, Yann~N Dauphin, and David Lopez-Paz.
\newblock mixup: Beyond empirical risk minimization.
\newblock In {\em International Conference on Learning Representations}, 2018.

\end{thebibliography}

\end{document}